\DeclareMathOperator\support{support}
\newcommand*{\state}{\mathbf{x}}
\newcommand*{\statespace}{\mathbb{X}}
\newcommand*{\action}{\mathbf{u}}
\newcommand*{\actionspace}{\mathbb{U}}
\newcommand*{\noise}{\mathbf{w}}
\newcommand*{\noisespace}{\mathbb{W}}
\newcommand{\ProbabilityMeasure}[1][\Sigma]{\mathbb{P}_{\state_0}^{#1}}
\newcommand{\Expectation}[1][\Sigma]{\mathbb{E}_{\state_0}^{#1}}
\newcommand*{\initial}{\statespace_{0}}
\newcommand*{\reach}{\statespace_{\star}}
\newcommand*{\avoid}{\statespace_{\oslash}}
\newcommand*{\globalmark}{\star}
\newcommand*{\System}{\Sigma}
\newcommand*{\ChangedSystem}{\widetilde{\System}}
\newcommand*{\ChangedPath}{\widetilde{\pi}^{\globalmark}}
\newcommand*{\Specification}{\varphi}
\newcommand*{\Dynamics}{f}
\newcommand*{\ChangedDynamics}{\widetilde{\Dynamics}}
\newcommand*{\Certificate}[1][]{C_{#1}}
\newcommand*{\Policy}[1][]{\pi_{#1}}
\newcommand*{\Bound}[1][]{\rho_{#1}}
\newcommand*{\GlobalPolicy}{\Policy^{\globalmark}}
\newcommand*{\ChangedSet}{\statespace_{?}}
\newcommand*{\InfimumChangedSet}{\inf\{\Certificate(\state)\mid \state\in\ChangedSet\}}
\newcommand*{\ChangedPolicy}[1][]{\widetilde\Policy_{#1}}
\newcommand*{\ChangedBound}[1][]{\widetilde\Bound_{#1}}
\newcommand*{\ChangedGlobalPolicy}{\ChangedPolicy^{\globalmark}}
\newcommand*{\InitialVertex}{v_{0}}
\newcommand*{\TargetVertex}{v_{\star}}
\renewcommand*{\InfimumChangedSet}{\inf\{\Certificate(\ChangedSet)\}}
\newtheorem{example}{Example}
\newtheorem{theorem}{Theorem}
\newtheorem{definition}{Definition}
\newtheorem{problem}{Problem}
\newtheorem{lemma}{Lemma}
\title{VeRecycle: Reclaiming Guarantees from Probabilistic Certificates \\for Stochastic Dynamical Systems after Change}
\author{
Sterre Lutz
\and
Matthijs T.J. Spaan
\And
Anna Lukina \\
\affiliations
Delft University of Technology, The Netherlands\\
\emails
\{s.lutz, m.t.j.spaan, a.lukina\}@tudelft.nl
}
\begin{document}

\maketitle

\begin{abstract}
    Autonomous systems operating in the real world encounter a range of uncertainties. Probabilistic neural Lyapunov certification is a powerful approach to proving safety of nonlinear stochastic dynamical systems. When faced with changes beyond the modeled uncertainties, e.g., unidentified obstacles, probabilistic certificates must be transferred to the new system dynamics. However, even when the changes are localized in a known part of the state space, state-of-the-art requires complete re-certification, which is particularly costly for neural certificates. We introduce VeRecycle, the first framework to formally reclaim guarantees for discrete-time stochastic dynamical systems. VeRecycle efficiently reuses probabilistic certificates when the system dynamics deviate only in a given subset of states. We present a general theoretical justification and algorithmic implementation. Our experimental evaluation shows scenarios where VeRecycle both saves significant computational effort and achieves competitive probabilistic guarantees in compositional neural control.
\end{abstract}

\section{Introduction}
\label{sec:introduction}

Autonomous systems often exhibit nonlinear stochastic behavior. 
In safety-critical applications (e.g., robot navigation), it is crucial to provide provable guarantees for autonomous systems, both in terms of performance (e.g., reaching a target) and safety (e.g., avoiding material damage)~\cite{kwiatkowska_when_2023}. The latter is particularly challenging to guarantee for the multitude of uncertainties of the real world. For instance, a spill of water may violate the friction threshold for a robot to move at the required safe speed.

Designing safe control for a stochastic nonlinear dynamical system, as a general model for many real-world autonomous systems, is enabled by the Lyapunov theory~\cite{kalman1960}. Initially developed for proving the stability of deterministic dynamical systems, Lyapunov theory prescribes the construction of a potential-energy function, which characterizes system dynamics and, when obeying a set of given conditions, becomes a formal certificate for the desired specification. Reaching the target while avoiding unsafe states can be naturally formulated as Lyapunov stability: the system's potential energy decreases towards the target region and increases around unsafe regions. Lyapunov certification extends to stochastic dynamical systems~\cite{blumenthal1968} and to proving reach-avoid properties thereof~\cite{prajna2004stochastic}.  

Overwhelming success of neural networks in visual recognition and especially feasibility of their verification~\cite{katz2017,gehr2018,zhang2018efficient,kouvaros2021,wu2024marabou} galvanized the research community into adoption of neural components in control design. Verification of dynamical systems under neural control called for an alternative to sum-of-square programming~\cite{papachristodoulou2002,topcu2008}, limited to polynomial dynamics. Neural Lyapunov certification proved successful for deterministic systems~\cite{richards2018,chang2019,abate2020formal,edwards2023,dawson2023safe}. Recently, inspired by termination analysis of stochastic programs~\cite{giacobbe2022termination}, probabilistic neural Lyapunov certification solidified its place as the state-of-the-art for reach-avoid properties of stochastic dynamical systems under neural control~\cite{zikelic_learning_2023,ansaripour2023,chatterjee2023,mathiesen2023,abate2024stochastic,badings_learning-based_2024}.   

For a given model of dynamics and a control policy, state-of-the-art probabilistic reach-avoid certificates guarantee that with probability above a given threshold, the system will reach a target set of states while avoiding unsafe states. 
However, in scenarios where the system dynamics deviate from their stochastic model, even if only in a known subset of system states, the originally computed certificate requires re-validation to transfer its guarantee. 
Yet, there exists no principled way to infer if the changes in the dynamics affect the guarantees or to reuse the corresponding certificate, without repeating costly (neural) certification.

\begin{example}\label{ex:VeRecycle-examples}
A robot navigates a warehouse with slippery floors, meaning the result of the robot's actions is probabilistic. 
To safely reach room 8 starting from room 0 (dotted squares in Figure~\ref{fig:VeRecycle-examples-certificate} mark the centers of the rooms), the robot is controlled by a composition of neural policies, including one guiding it from room 4 to 5, avoiding collision with the walls.
\Cref{fig:VeRecycle-examples-certificate} visualizes a probabilistic neural certificate, which guarantees that starting anywhere close to the center of room 4, the robot will safely reach the center of room 5 with probability at least $0.88$. 
Suppose that an unidentified obstacle appeared in room 2 and the whole room is now deemed unsafe. Although the obstacle is not interfering with the given subtask, visual inspection shows the certificate is now invalid (dashed line crosses the area around the obstacle). 

\begin{figure}[t]
\centering
        \includegraphics[width=\columnwidth]{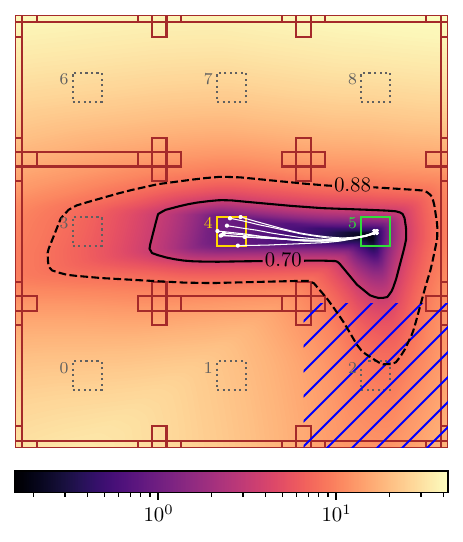}
        \caption{A probabilistic certificate (color gradient) for a navigation subtask \colorbox{magenta!50!blue}{${\color{yellow}4\:\square}$} (yellow) $\rightarrow$ \colorbox{magenta!20!blue}{${\color{green}5\:\square}$} (green), with darker color indicating higher probability of the system (with trajectories sampled in white) to complete the subtask while avoiding the walls (outlined in red). Dashed line shows original certified probability threshold, solid line shows threshold reclaimed by VeRecycle after a localized change in the system's dynamics (slanted blue pattern).}\label{fig:VeRecycle-examples-certificate}
\end{figure}
\end{example}

In this work, we propose the first, to the best of our knowledge, theoretical justification for reclaiming probabilistic reach-avoid guarantees for neural control policies of discrete-time stochastic dynamical systems affected by a localized change, i.e., for which the dynamics changed in a given subset of the state space. In that, we assume no knowledge about the nature of the change.

Our approach, called VeRecycle, takes as input a neural control policy and a probabilistic neural reach-avoid certificate with a particular probability threshold. For an arbitrary, possibly unknown, change in the system dynamics affecting a known subset of the state space, VeRecycle automatically infers a new certified reach-avoid probability threshold.

In Example~\ref{ex:VeRecycle-examples}, VeRecycle quickly identifies the affected certificate and updates its threshold to $0.70$ (solid line has no intersection with the unsafe area), without any additional neural training. This enables further use of existing neural policies and reclaiming the global probabilistic guarantee.

VeRecycle magnifies the advantages of tackling the problems with innate modular structure through the prism of localized analysis, for which compositional approaches to control and---with VeRecycle---certification demonstrate greater scalability~\cite{neary_verifiable_2023,zikelic_compositional_2023,zhang2023compositional,delgrange2024controller}. In compositional control, our approach 1) automatically identifies which components of the compositional policy are affected by the change and 2) eliminates---when possible---the need for costly repairs of neural control policies, certificates, or system models by using unaffected components in a new compositional policy.

To summarize, we propose VeRecycle, a framework to formally reclaim guarantees from probabilistic neural certificates for discrete-time stochastic dynamical systems after localized changes. 
First, we formally prove that sound guarantees are inferred by VeRecycle from (approximated) infima of the original certificates. Second, our experimental evaluation demonstrates scenarios where VeRecycle's guarantees are competitive---particularly for compositional control---compared to complete re-certification, with computational cost reduced by three orders of magnitude. Thus, VeRecycle enables higher flexibility and scalability of the state-of-the-art probabilistic reach-avoid certification of discrete-time stochastic dynamical systems.

\section{Preliminaries}
\label{sec:preliminaries}
We consider discrete-time stochastic dynamical systems $\Sigma = (\statespace, \actionspace, \noisespace, f,\mu)$, defined by the following equation $\forall\:t\in\mathbb{N}_{>0}$: 
\begin{equation}\label{eq:system}
    \state_{t+1} = f(\state_{t}, \action_{t}, \noise_{t}), \quad \noise_{t} \sim \mu,
\end{equation}
where $\state_{t} \in \statespace \subseteq \mathbb{R}^n$ is the $n$-dimensional state of $\Sigma$ at time $t$ with $\state_0$ as the initial state,
$\action_{t} \in \actionspace \subseteq \mathbb{R}^m$ is the $m$-dimensional control input $\Sigma$ at time $t$, 
and $\noise_{t} \in \noisespace \subseteq \mathbb{R}^p$ is the $p$-dimensional realization of stochastic disturbance of $\Sigma$ at time $t$. 
Transitions between states follow the dynamics function
$f: \statespace \times \actionspace \times \noisespace \rightarrow \statespace$, and the probability distribution $\mu$ over $\noisespace$, from which $\noise_{t}$ is sampled independently at each time step. When the control inputs are determined by a policy $\Policy : \statespace \rightarrow \actionspace$, i.e., $\action_{t} := \Policy(\state_{t})$, $\Sigma$ is said to be \textit{controlled} by $\Policy$.

An infinite sequence of state, action, and disturbance triples $(\state_{t}, \action_{t}, \noise_{t})_{t\in\mathbb{N}_0}$ is a \emph{trajectory} if for all time steps $t\in\mathbb{N}_{>0}$, it holds that $\state_{t+1} = f(\state_{t}, \action_{t}, \noise_{t})$, $\action_t = \Policy(\state_{t})$, and $\noise_{t} \in \support (\mu)$. 
Given a fixed initial state $\state_0$, Markov decision process (MDP) semantics establish a probability space over all such trajectories \protect\cite{puterman_markov_2014}. The probability measure and expectation within this probability space are denoted $\ProbabilityMeasure[\Sigma,\Policy]$ and $\Expectation[\Sigma,\Policy]$, respectively.

For the system semantics and probability measures to be well-defined, we assume $\statespace$, $\actionspace$, and $\noisespace$ to be Borel-measurable. Moreover, we follow the standard assumptions from control theory that $f$ and $\Policy$ are Lipschitz continuous, and that $\statespace$ and $\noisespace$ are closed and bounded.

\subsection{Reach-avoid Verification}
A common way to quantify the performance and safety of control policies for stochastic dynamical systems is via probabilistic reach-avoid specifications, which enforce a threshold on the probability of reaching a target state without encountering unsafe states.
\begin{definition}\label{def:reach-avoid-specification}
    A \emph{reach-avoid specification} is a tuple $(\reach,\avoid,\initial,\Bound)$, where $\reach, \avoid, \initial \subseteq \statespace$ are target states, unsafe states, and initial states, respectively, and $\Bound \in [0,1]$ is a probability threshold, i.e., minimally acceptable probability with which the specification holds. A system $\Sigma$ controlled by a policy $\Policy$ satisfies a reach-avoid specification, denoted $\Sigma,\Policy \vDash (\reach,\avoid,\initial,\Bound)$, if and only if, for all $\state_0\in\initial$,
    \[\ProbabilityMeasure[\Sigma,\Policy]\{\exists\tau\in\mathbb{N}_0 : \state_{\tau}\in\reach \land (\forall t < \tau\in \mathbb{N}_0 : \state_{t}\notin\avoid)\} \geq \Bound.\]
\end{definition}

\begin{example}\label{ex:nine-rooms-prelim-monolithic}
    Consider the original system described in \cref{ex:VeRecycle-examples}, i.e., before water was spilled. 
    This stochastic dynamical system $\Sigma$ has a two-dimensional state space $\statespace = [0, 3]^2$, the two features being x and y coordinates, respectively. The navigation task of safely moving from room 4 to 5 (yellow and green in \cref{fig:VeRecycle-examples-certificate} respectively) with at least a probability of $0.88$, is a reach-avoid specification $\varphi$ where $\reach = [2.4,2.6] \times [1.4, 1.6]$, $\avoid$ is the union of the walls, $\initial=[1.4,1.6]^2 $, and $\rho=0.88$. 
\end{example}

To verify that a reach-avoid specification holds for a system under a particular policy, one can construct probabilistic reach-avoid certificates, whose existence serves as proof. 

\begin{definition}[\cite{zikelic_learning_2023}]\label{def:reach-avoid-certificate}
A continuous function $\Certificate:\statespace\rightarrow\mathbb{R}_{\geq 0}$ is a \emph{reach-avoid certificate} for a system $\Sigma$, policy $\Policy$, and reach-avoid specification $\varphi=(\reach,\avoid,\initial,\Bound)$, if and only if the following conditions hold:
\begin{enumerate}
    \item Initial condition: for all $\state\in\initial$, $\Certificate(\state)\leq 1$;
    \item Safety condition: for all $\state\in\avoid$, $\Certificate(\state)\geq \frac{1}{1-\Bound}$;
    \item Decrease condition: there exists $\varepsilon > 0,$ such that for all $\state\in\statespace\backslash\reach$ either $\mathbb{E}_{\noise\sim\mu}[\Certificate(\state) - \Certificate(f(\state, \Policy(\state), \noise))]\geq \varepsilon$ or $\Certificate(\state)\geq \frac{1}{1-\Bound}$. 
\end{enumerate}
The existence of a probabilistic reach-avoid certificate proves (i.e., \emph{certifies}) that $\System,\Policy\vDash\Specification$. We use the shorthand $\Certificate(S):=\{C(\state)\mid \state\in S\}$ for the image of a subset $S\subseteq\statespace$ under $\Certificate$.
\end{definition}

The z axis (color gradient) in \cref{fig:VeRecycle-examples-certificate} shows an example of a reach-avoid certificate for the specification $\varphi$ and system $\Sigma$ defined in \cref{ex:nine-rooms-prelim-monolithic}. 
Intuitively, the third condition requires that, in any state, $\Sigma$ is unlikely to transition to a state with a higher value in $C$, with larger differences being increasingly improbable. Consequently, the first and second conditions establish that from the initial states (with values $\leq 1$), it is rare to reach the unsafe states (with values $\geq \frac{1}{1-\Bound}$). Furthermore, $\Sigma$ is likely to progress toward the lowest-valued part of the certificate. Since only the target area is permitted to violate the decrease condition and fall below the safety value $\frac{1}{1-\Bound}$, the certificate takes its lowest value in the target area, to which the state of the system eventually converges.

Constructing probabilistic reach-avoid certificates analytically is challenging, especially for nonlinear system dynamics or neural policies. Thus, the state-of-the-art is a counter-example guided synthesis procedure that outputs a certificate in the form of a neural network---a \emph{neural certificate}~\cite{zikelic_learning_2023,chatterjee2023,badings_learning-based_2024}. The procedure loops between learning and verification until a correct certificate is found or the allotted time is exceeded; in the latter case, the result is inconclusive. 

The learning procedure optimizes the candidate certificate on differentiable versions of conditions 1--3 from \cref{def:reach-avoid-certificate} for a set of randomly sampled states and a set of counterexamples produced by the verification procedure. The verification procedure discretizes the continuous state space into ``cells'', verifying conditions 1--3 soundly for all states within the cells using Lipschitz constants and interval bound propagation (IBP)~\cite{mirman_differentiable_2018,gowal_effectiveness_2019}. If the verification is inconclusive for a cell, i.e., when the bounds given by IBP and Lipschitz reasoning are not tight enough to prove or disprove the conditions for all states in the cell, the cell is split into smaller cells to attempt verification again. If a condition is certainly violated for a cell, the verification procedure feeds all states in the cell back into the learning procedure as counterexamples for further optimization.
\section{Problem Statement}\label{sec:problem_statement}
We formulate the general problem of reclaiming guarantees from probabilistic reach-avoid certificates after a change in system dynamics. 
Our problem formulation specifically addresses changes that affect the system dynamics only in a known subset of states $\ChangedSet$. Importantly, this definition does not require additional knowledge of the new dynamics. 
Instead, the aim is to determine guarantees that are valid for any alternative dynamics function, as long as anywhere outside $\ChangedSet$, the function is equivalent to the original dynamics. 

This formulation captures a wide range of changes, such as the detection of unidentified obstacles, the appearance of materials with unknown friction coefficients within a specific region, or the onset of unexplained stochastic phenomena occurring exclusively within a certain velocity range. 

Formally, we consider a stochastic dynamical system $\Sigma = (\statespace, \actionspace,\noisespace,f,\mu)$ controlled by a policy $\Policy$, that satisfies a reach-avoid specification $\varphi = (\reach,\avoid,\initial,\Bound)$, and a reach-avoid certificate $\Certificate$ proving that relation, i.e., $\Certificate$ certifies $\Sigma,\Policy\vDash \varphi$.

\begin{definition}\label{def:reclaimable_threshold}
    Given states $\ChangedSet\subset\statespace$, a threshold $\Bound'\in[0,1]$ is \emph{reclaimable}, 
    if and only if $\Certificate$ certifies $\ChangedSystem, \Policy \vDash (\reach,\avoid,\initial,\Bound')$ for all $\ChangedSystem = (\statespace, \actionspace,\noisespace,\ChangedDynamics,\mu)$, such that
    \begin{equation}\label{eq:changed_set}
    \begin{aligned}
    \{\state\in\statespace \mid \exists \action,\! \noise:\:
    & \ChangedDynamics(\state, \action, \noise) \!\neq\! f(\state, \action, \noise)\} \subseteq \ChangedSet.
    \end{aligned} 
    \end{equation}
\end{definition}

\begin{problem}\label{prob:monolithic}
    Given a certificate $C$, original threshold $\Bound$, and states $\ChangedSet\subset\statespace$, determine the maximum reclaimable threshold $\rho'\leq\Bound$, denoted $\ChangedBound$. 
\end{problem}

\begin{example}[Example~\ref{ex:nine-rooms-prelim-monolithic} continued]\label{ex:nine-rooms-monolithic-problem-statement}
    The spilled water in the blue hatched area in \cref{fig:VeRecycle-examples-certificate} creates a new, alternative system with dynamics function \smash{$\ChangedDynamics$},
    \begin{equation}
    \ChangedDynamics(\state,\action{},\noise{}) = \begin{cases}
        ? & \text{if }\state\in [2,3]\times[0,1]
        \\
        f(\state,\action{},\noise{})& \text{otherwise.}
    \end{cases}
    \end{equation}
    The solution to \cref{prob:monolithic} for $\ChangedSet = [2,3]\times[0,1]$ gives us a threshold $\ChangedBound$ on the probability of reaching green without colliding with walls. Whether the consequences of the spilled water are erratic movements, a complete shut-down, or barely noticeable, should not matter for this threshold: $C$ should certify $\ChangedBound$ for any possible realization of \smash{$\ChangedDynamics$} and be the largest threshold $\leq \rho$ for which this holds.
\end{example}

We remark that \cref{prob:monolithic} defines a maximum threshold that is \textit{always} valid, i.e., without any assumptions on the new dynamics function beyond Equation~\ref{eq:changed_set}, and for a fixed $\Certificate$ and $\Policy$. While higher probability thresholds might be obtainable once the actual new dynamics are specified, such thresholds are not the generally valid thresholds of \cref{prob:monolithic}: we are specifically interested in solving the problem for unknown dynamics in a known set of states. Similarly, updating the certificate or policy for the changed dynamics could also result in a higher threshold than $\ChangedBound$. However, since the state-of-the-art certification of nonlinear, stochastic dynamics and/or neural policies is done with neural networks, changing them requires costly additional training and verification. 

Thus, for applications where a new model of the dynamics can be obtained and alternative policies and certificates can be trained, the general solution to \cref{prob:monolithic} still serves a practical purpose. If the general threshold already matches the original specification or falls within some acceptable margin, there is no need to update the system model and repeat costly re-certification. Thus, a solution to \cref{prob:monolithic} serves as proof that such updates are truly necessary before spending significant computational resources.
\section{VeRecycle}
\label{sec:methodology}
We propose VeRecycle, a framework to reclaim guarantees from reach-avoid certificates given a subset of states $\ChangedSet$ in which dynamics may have changed, as formulated in \cref{prob:monolithic}. VeRecycle exploits a relation between the maximum reclaimable probability threshold $\ChangedBound$ and the infimum of a certificate on $\ChangedSet$. First, we formalize this relation and prove that it provides an exact solution to \cref{prob:monolithic}. Second, we discuss how VeRecycle reduces computational costs through approximations of infima for neural certificates while still providing soundness guarantees. 

\subsection{Exact Solution}\label{sec:exact-solution}
In this subsection, we prove that an exact solution $\ChangedBound$ to \cref{prob:monolithic} depends only on the original bound $\Bound$ and the lowest value the certificate takes for states in $\ChangedSet$: the infimum $\InfimumChangedSet$. 
In particular, we show that whenever $\InfimumChangedSet \geq 1$, $\ChangedBound$ equals
\begin{equation}\label{eq:complete_formula}
     \min\left(\Bound,1-\frac{1}{\InfimumChangedSet}\right),
\end{equation}
and that otherwise no reclaimable threshold exists.

We first show that whenever $\InfimumChangedSet \geq 1$, Equation~\ref{eq:complete_formula} upper-bounds and lower-bounds $\ChangedBound$ (\cref{thm:upper-bound,thm:lower-bound} respectively). Subsequently, we combine these results with a proof of the non-existence of $\ChangedBound$ in case $\InfimumChangedSet < 1$, completing our proof of the relation.

\def\lowerbound{
        If $\InfimumChangedSet \geq 1$, then Equation~\ref{eq:complete_formula} lower-bounds the maximum reclaimable threshold $\ChangedBound$ defined in \cref{prob:monolithic}.
    }
\begin{lemma}[Lower bound of $\ChangedBound$]\label{thm:lower-bound}
    \lowerbound
\end{lemma}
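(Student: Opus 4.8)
The plan is to prove the lemma by showing that the quantity in Equation~\ref{eq:complete_formula}, namely $\Bound' := \min(\Bound, 1 - \frac{1}{\InfimumChangedSet})$, is itself a reclaimable threshold in the sense of \cref{def:reclaimable_threshold}. Since $\Bound' \le \Bound$ by construction, and \cref{prob:monolithic} takes $\ChangedBound$ to be the maximum reclaimable threshold that is at most $\Bound$, establishing that $\Bound'$ is reclaimable immediately yields $\Bound' \le \ChangedBound$, which is the claim. So I would fix an arbitrary changed dynamics $\ChangedDynamics$ satisfying Equation~\ref{eq:changed_set}, put $\ChangedSystem = (\statespace,\actionspace,\noisespace,\ChangedDynamics,\mu)$, and verify that the \emph{same, unchanged} function $\Certificate$ satisfies all three conditions of \cref{def:reach-avoid-certificate} for $\ChangedSystem$, $\Policy$, and $(\reach,\avoid,\initial,\Bound')$.

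First I would dispatch the initial and safety conditions, which do not refer to the dynamics at all. The initial condition, $\Certificate(\state) \le 1$ on $\initial$, is inherited verbatim from the hypothesis that $\Certificate$ certifies $\System,\Policy \vDash \Specification$. For the safety condition, I note that $\InfimumChangedSet \ge 1$ forces $1 - \frac{1}{\InfimumChangedSet} \in [0,1)$, hence $\Bound' \in [0,1)$ and the new specification is well-formed; then $\Bound' \le \Bound$ gives $\frac{1}{1-\Bound'} \le \frac{1}{1-\Bound}$, so the hypothesis $\Certificate(\state) \ge \frac{1}{1-\Bound}$ for $\state \in \avoid$ already delivers the required $\Certificate(\state) \ge \frac{1}{1-\Bound'}$.

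The decrease condition is where the real work lies, and I would reuse the very same witness $\varepsilon > 0$ from the decrease condition of $\Certificate$ for $\System$, splitting on membership in $\ChangedSet$. If $\state \in \statespace \setminus (\reach \cup \ChangedSet)$, the contrapositive of Equation~\ref{eq:changed_set} gives $\ChangedDynamics(\state,\action,\noise) = f(\state,\action,\noise)$ for all $\action,\noise$, in particular for $\action = \Policy(\state)$; thus $\mathbb{E}_{\noise\sim\mu}[\Certificate(\state) - \Certificate(\ChangedDynamics(\state,\Policy(\state),\noise))]$ coincides pointwise with the corresponding term for $\System$, and the original disjunction (this term is $\ge \varepsilon$, or $\Certificate(\state) \ge \frac{1}{1-\Bound} \ge \frac{1}{1-\Bound'}$) transfers unchanged. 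If instead $\state \in \ChangedSet$, I use nothing about $\ChangedDynamics$: $\Certificate(\state) \ge \InfimumChangedSet$ simply by definition of the infimum, and rearranging $\Bound' \le 1 - \frac{1}{\InfimumChangedSet}$ (legitimate since $\InfimumChangedSet \ge 1 > 0$ and $1 - \Bound' > 0$) yields $\frac{1}{1-\Bound'} \le \InfimumChangedSet \le \Certificate(\state)$, so the second disjunct of the decrease condition holds outright. Every $\state \in \statespace \setminus \reach$ falls into one of these two cases and the single fixed $\varepsilon$ works throughout, so $\Certificate$ is a reach-avoid certificate for $\ChangedSystem,\Policy,(\reach,\avoid,\initial,\Bound')$, i.e., $\Bound'$ is reclaimable.

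I expect the only delicate point to be the $\state \in \ChangedSet$ branch of the decrease condition: the key realization is that membership in $\ChangedSet$ alone, through the infimum bound, already pushes $\Certificate(\state)$ to or above the safety level $\frac{1}{1-\Bound'}$ — this is exactly why $\Bound'$ is chosen no larger than $1 - \frac{1}{\InfimumChangedSet}$, and why this branch needs no knowledge of the unknown dynamics. As sanity checks I would confirm the boundary case $\InfimumChangedSet = 1$ (then $\Bound' = 0$ and $\frac{1}{1-\Bound'} = 1 \le \InfimumChangedSet$, consistent) and observe that the infimum need not be attained, since only the inequality $\Certificate(\state) \ge \InfimumChangedSet$ for $\state \in \ChangedSet$ is ever used.
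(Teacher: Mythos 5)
Your proof is correct and follows essentially the same route as the paper's: show that $\min(\Bound, 1-\frac{1}{\InfimumChangedSet})$ is itself reclaimable by verifying the three certificate conditions, using the infimum bound to satisfy the second disjunct of the decrease condition on $\ChangedSet$ and the unchanged dynamics elsewhere. Your two-way case split on membership in $\ChangedSet$ merely merges the paper's three cases (it absorbs the paper's ``$\Certificate(\state)\geq\frac{1}{1-\Bound}$'' case into each branch), and is complete and sound.
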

\textit{Proof sketch (proof in App.~\ref{sec:app-lemma-1}).} We set $\rho'$ to Equation~\ref{eq:complete_formula} and prove that it is a reclaimable threshold for $\Certificate$. By definition of a maximum, it follows that the maximum of all such reclaimable thresholds, $\ChangedBound$, is lower-bounded by Equation~\ref{eq:complete_formula}.

Threshold $\rho'$ is reclaimable because it is chosen precisely such that for all states in $\ChangedSet$, the certificate value is at least $\frac{1}{1-\rho'}$. Intuitively, this means that the states with unknown dynamics are visited rarely enough that we do not need to have an expected decrease of the certificate's value there: we reach those states with a probability $\leq 1-\rho'$.

For states outside of the changed states $\ChangedSet$, the expected value of the subsequent states remains unchanged since the dynamics function in $\statespace\backslash \ChangedSet$ and the certificates' values remain unchanged. Note that even if one of the future subsequent states \textit{is} within $\ChangedSet$ and may therefore have new dynamics, this does not change the expectation in the current state, since the expectation is calculated over one time-step only. Conditions 1 and 2 follow similarly because they were satisfied for the original system and threshold, which concludes the proof of \cref{thm:lower-bound}.

\def\upperbound{If $\InfimumChangedSet \geq 1$, then Equation~\ref{eq:complete_formula} upper-bounds the maximum reclaimable threshold $\ChangedBound$ defined in \cref{prob:monolithic}.}
\begin{lemma}[Upper bound of $\ChangedBound$]\label{thm:upper-bound}
    \upperbound
\end{lemma}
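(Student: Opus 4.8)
The plan is to establish the bound contrapositively: I will show that no threshold $\rho'\in(1-\tfrac{1}{\InfimumChangedSet},\,\Bound]$ is reclaimable, by exhibiting for each such $\rho'$ a single alternative dynamics $\ChangedDynamics$ that agrees with $f$ outside $\ChangedSet$ (so Equation~\ref{eq:changed_set} holds) but for which $\Certificate$ is \emph{not} a reach-avoid certificate for $\ChangedSystem,\Policy$ at threshold $\rho'$. Since $\ChangedBound$ is, by \cref{prob:monolithic}, the largest reclaimable threshold that is $\le\Bound$, this gives $\ChangedBound\le 1-\tfrac{1}{\InfimumChangedSet}$; combined with the trivial $\ChangedBound\le\Bound$ this is exactly Equation~\ref{eq:complete_formula}. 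If that interval is empty, i.e.\ $1-\tfrac{1}{\InfimumChangedSet}\ge\Bound$, then Equation~\ref{eq:complete_formula} already equals $\Bound\ge\ChangedBound$ and there is nothing to prove.

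Fix such a $\rho'$. Rewriting the hypothesis gives $\tfrac{1}{1-\rho'}>\InfimumChangedSet$ (reading $\tfrac{1}{1-\rho'}=+\infty$ when $\rho'=1$), so by definition of the infimum there is a witness state $\hat\state\in\ChangedSet$ with $\Certificate(\hat\state)<\tfrac{1}{1-\rho'}$, which I take to satisfy $\hat\state\notin\reach$ (see below). Define $\ChangedDynamics(\state,\action,\noise):=\hat\state$ for all $\state\in\ChangedSet$ and $\ChangedDynamics(\state,\action,\noise):=f(\state,\action,\noise)$ otherwise. Conditions~1 and~2 of \cref{def:reach-avoid-certificate} do not involve the dynamics and, since $\rho'\le\Bound$, are inherited from $\Certificate$ certifying $\System,\Policy\vDash\Specification$, so the adversary must break the decrease condition, and it does: at $\hat\state\in\statespace\setminus\reach$ we have $\mathbb{E}_{\noise\sim\mu}[\Certificate(\hat\state)-\Certificate(\ChangedDynamics(\hat\state,\Policy(\hat\state),\noise))]=\Certificate(\hat\state)-\Certificate(\hat\state)=0$, so no $\varepsilon>0$ satisfies the required expected-decrease inequality there, while $\Certificate(\hat\state)<\tfrac{1}{1-\rho'}$ rules out the disjunct $\Certificate(\hat\state)\ge\tfrac{1}{1-\rho'}$. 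Hence the decrease condition fails for $\ChangedSystem,\Policy$ at $\rho'$, so $\Certificate$ does not certify $\ChangedSystem,\Policy\vDash(\reach,\avoid,\initial,\rho')$, and $\rho'$ is not reclaimable.

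I expect the two bracketed points to be the real work. First, if the framework demands that $\ChangedDynamics$ satisfy the same Lipschitz/continuity assumptions as $f$, the constant-on-$\ChangedSet$ map need not glue continuously to $f$ along $\partial\ChangedSet$; one then replaces it by a Lipschitz map that still sends every state of $\ChangedSet$ near $\hat\state$ to $\Certificate$-values at least $\Certificate(\hat\state)$ (for instance an interpolation keeping the one-step expected change at $\hat\state$ nonpositive), which yields the same contradiction. If instead the quantification in \cref{def:reclaimable_threshold} ranges over all measurable alternative dynamics, the constant map works unchanged. Second, the decrease condition only constrains $\statespace\setminus\reach$, so the witness must lie in $\ChangedSet\setminus\reach$; this is guaranteed whenever $\inf\{\Certificate(\ChangedSet\setminus\reach)\}=\InfimumChangedSet$, in particular under the natural assumption $\ChangedSet\cap\reach=\emptyset$ (a change confined to the target is vacuous), or because $\Certificate$ takes its smallest values on $\reach$, so that a hypothesis $\InfimumChangedSet\ge 1$ already pushes the infimum-witnessing part of $\ChangedSet$ out of $\reach$. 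I would pin down whichever of these the paper adopts, after which the construction above closes the proof.
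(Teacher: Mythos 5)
Your proof is correct and rests on essentially the same idea as the paper's: pit the certificate against an adversarial dynamics that is absorbing (in your case, constant) on $\ChangedSet$ at a state where $\Certificate$ is close to its infimum, so that the decrease condition forces $\Certificate\geq\frac{1}{1-\rho'}$ there, yielding the bound after rearrangement. The two caveats you flag---continuity of $\ChangedDynamics$ across the boundary of $\ChangedSet$ and the possibility that the infimum is witnessed inside $\reach$---are equally unaddressed in the paper's own proof (which in addition assumes the infimum is attained, something your near-infimum witness cleanly avoids), so they are not gaps relative to the paper.
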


\textit{Proof sketch (proof in App.~\ref{sec:app-lemma-2}).} Any reclaimable threshold, including the maximum ($\ChangedBound$), must be a valid threshold for all realizations of \smash{$\ChangedDynamics$}, including fully absorbing dynamics, i.e., where \smash{$\ChangedDynamics(\state,\action,\noise) = \state$}, $\forall\state\in\ChangedSet$. For such absorbing dynamics, it is impossible to prove the expected decrease of the certificate, since the expectation always equals the certificate's value in the current state. Thus, by condition 3 of \cref{def:reach-avoid-certificate}, the infimum of the certificate in $\ChangedSet$ must be above $\frac{1}{1-\ChangedBound}$, from which we derive the inequality in Lemma~\ref{thm:upper-bound}. 

\def\existencesolution{If $\InfimumChangedSet < 1$, then $\ChangedBound$ does not exist.}
\begin{lemma}\label{thm:existence-solution}
    \existencesolution
\end{lemma}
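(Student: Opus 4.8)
\textit{Proof approach.} The plan is to prove the slightly stronger statement that when $\InfimumChangedSet < 1$ \emph{no} threshold $\rho' \in [0,1]$ is reclaimable at all; in particular there is no reclaimable $\rho' \le \Bound$, so the maximum $\ChangedBound$ of \cref{prob:monolithic} cannot exist. The engine of the argument is the same worst-case realization already used for \cref{thm:upper-bound}: the fully absorbing dynamics $\ChangedDynamics(\state,\action,\noise) = \state$ for all $\state \in \ChangedSet$ (and $\ChangedDynamics = \Dynamics$ outside $\ChangedSet$), which is admissible under Equation~\ref{eq:changed_set} and makes the expected one-step difference of $\Certificate$ vanish identically on $\ChangedSet$, collapsing the decrease condition of \cref{def:reach-avoid-certificate} into the pointwise requirement $\Certificate(\state) \ge \tfrac{1}{1-\rho'}$ there.

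The steps, in order, are: (i) from $\InfimumChangedSet < 1$ and the definition of infimum, extract a state $\state^\ast \in \ChangedSet$ with $\Certificate(\state^\ast) < 1$, noting $\state^\ast \notin \reach$ (see the caveat below); (ii) fix an arbitrary candidate $\rho' \in [0,1)$ and the absorbing system $\ChangedSystem$ above, and check that $\Certificate$ fails Condition~3 of \cref{def:reach-avoid-certificate} at $\state^\ast$: for every $\varepsilon > 0$ we have $\mathbb{E}_{\noise\sim\mu}[\Certificate(\state^\ast) - \Certificate(\ChangedDynamics(\state^\ast,\Policy(\state^\ast),\noise))] = 0 < \varepsilon$, while $\Certificate(\state^\ast) < 1 \le \tfrac{1}{1-\rho'}$, so neither disjunct holds and $\Certificate$ does not certify $\ChangedSystem,\Policy \vDash (\reach,\avoid,\initial,\rho')$; (iii) dispatch the boundary case $\rho' = 1$ separately --- Condition~2 then forces $\Certificate = +\infty$ on $\avoid$, impossible for a real-valued certificate when $\avoid \neq \emptyset$, and if $\avoid = \emptyset$ the argument of step (ii) still applies at $\state^\ast$; (iv) conclude that the set of reclaimable thresholds is empty, so it contains no element $\le \Bound$ and has no maximum, i.e.\ $\ChangedBound$ does not exist.

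I expect the point to be careful about is not a calculation but the logical shape of a non-existence claim: $\ChangedBound$ must be ruled out against \emph{every} candidate value simultaneously, which is exactly why the counterexample system $\ChangedSystem$ is chosen once and for all (the absorbing realization) and then shown to defeat all $\rho'$, rather than choosing $\ChangedSystem$ after $\rho'$. The one genuinely delicate point is the tacit use of $\state^\ast \notin \reach$: this holds whenever $\ChangedSet \cap \reach = \emptyset$, which is the natural situation (a reach-avoid target whose dynamics are themselves uncertain would make the certificate's role at the target ill-posed); absent that, the statement and proof should be read with the infimum taken over $\ChangedSet \setminus \reach$. Everything else --- admissibility of the absorbing dynamics (exactly as in the proof of \cref{thm:upper-bound}) and the inequality $\tfrac{1}{1-\rho'} \ge 1$ for $\rho' \in [0,1)$ --- is routine.
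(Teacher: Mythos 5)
Your proof is correct and follows essentially the same route as the paper's: instantiate the absorbing realization $\ChangedDynamics(\state,\action,\noise)=\state$ on $\ChangedSet$ and observe that the vanishing one-step expected decrease forces the second disjunct of Condition~3, which a state with certificate value below $1$ cannot satisfy. Your version is in fact slightly more careful on two points the paper glosses over --- you pick a witness $\state^\ast$ with $\Certificate(\state^\ast)<1$ directly from the definition of the infimum rather than assuming the infimum is attained on $\ChangedSet$, and you explicitly flag the tacit requirement $\state^\ast\notin\reach$ (i.e.\ $\ChangedSet\cap\reach=\emptyset$, or reading the infimum over $\ChangedSet\setminus\reach$), without which Condition~3 is vacuous at $\state^\ast$ and the argument, in both your write-up and the paper's, would not go through.
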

\textit{Proof sketch (proof in App.~\ref{sec:app-lemma-3}).} We prove the implication of \cref{thm:existence-solution} by assuming that $\ChangedBound$ exists, and concluding that $\InfimumChangedSet\geq 1$. If $\ChangedBound$ exists, it must be valid for fully absorbing dynamics: \smash{$\ChangedDynamics(\state,\action,\noise) = \state$}, $\forall\state\in\ChangedSet$. Thus, by condition 3 of \cref{def:reach-avoid-certificate}, the infimum of the certificate in $\ChangedSet$ must be above $\frac{1}{1-\ChangedBound}$, from which we derive $\InfimumChangedSet\geq 1$. 

\def\equality{The maximum reclaimable threshold $\ChangedBound$ defined in \cref{prob:monolithic} does not exist if $\InfimumChangedSet < 1$, and otherwise $\ChangedBound$ equals Equation~\ref{eq:complete_formula}.}
\begin{theorem}[Equality to $\ChangedBound$]\label{thm:equality}
    \equality
\end{theorem}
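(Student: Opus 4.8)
The plan is to obtain \cref{thm:equality} directly by assembling the three preceding lemmas and case-splitting on the value of $\InfimumChangedSet$. In the case $\InfimumChangedSet < 1$, the claim is verbatim \cref{thm:existence-solution}, so there is nothing further to do: $\ChangedBound$ does not exist. Hence the only work is the complementary case.

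If $\InfimumChangedSet \geq 1$, I would show that $\ChangedBound$ exists and coincides with Equation~\ref{eq:complete_formula}. The key observation, already established inside the proof of \cref{thm:lower-bound}, is that the value $\rho' := \min\bigl(\Bound,\, 1 - \tfrac{1}{\InfimumChangedSet}\bigr)$ given by Equation~\ref{eq:complete_formula} is \emph{itself} a reclaimable threshold; moreover $\rho' \leq \Bound$ by construction of the minimum, so $\rho'$ belongs to the set of candidate thresholds $\rho'' \le \Bound$ in \cref{prob:monolithic}, which is therefore nonempty. By \cref{thm:upper-bound}, every reclaimable threshold is at most $\rho'$, so $\rho'$ is simultaneously a member and an upper bound of this set; hence it is its maximum, i.e. $\ChangedBound = \rho'$, which is exactly Equation~\ref{eq:complete_formula}.

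The only subtlety, and the closest thing to an obstacle, is making sure the ``maximum'' in \cref{prob:monolithic} is genuinely attained rather than merely a supremum approached from below; this is precisely why I would lean on the fact that the argument behind \cref{thm:lower-bound} exhibits Equation~\ref{eq:complete_formula} as a concrete reclaimable threshold, so that the upper bound of \cref{thm:upper-bound} is met with equality and the maximum is realized at that value. No further reasoning about the certificate $\Certificate$, the policy $\Policy$, or the admissible alternative dynamics $\ChangedDynamics$ is required beyond what Lemmas~\ref{thm:lower-bound}--\ref{thm:existence-solution} already supply; the theorem is essentially a bookkeeping corollary that packages the two-sided bound together with the non-existence result into a single closed-form characterization.
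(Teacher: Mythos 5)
Your proof is correct and follows essentially the same route as the paper: case-split on $\InfimumChangedSet$, invoke \cref{thm:existence-solution} for the case $\InfimumChangedSet<1$, and combine \cref{thm:lower-bound} and \cref{thm:upper-bound} to pin down equality otherwise. Your extra remark that the lower-bound argument exhibits Equation~\ref{eq:complete_formula} as a concrete reclaimable threshold (so the maximum is genuinely attained, not merely a supremum) is a small point of care the paper's own proof glosses over, but it does not change the structure of the argument.
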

\Cref{thm:equality} follows directly from \cref{thm:lower-bound,thm:upper-bound,thm:existence-solution}. A formal proof is given in App.~\ref{sec:app-theorem-1}.

\subsection{Sound Approximation}\label{sec:sound-approximation}
\Cref{thm:equality} proves that the exact solution to \cref{prob:monolithic} only depends on the infimum of the certificate for $\ChangedSet$ and the original certified threshold $\Bound$. In practice, however, finding the infimum of a certificate can be expensive and difficult to scale, particularly for neural certificates. To address this, VeRecycle is equipped to work with approximations of the infimum, while still producing only sound thresholds. 

Specifically, methods such as interval bound propagation (IBP)~\cite{gowal_effectiveness_2019} can be used to determine values $I^-$ and $I^+$ such that $I^- \leq \InfimumChangedSet \leq I^+$. With these bounds on the infimum, we can determine guaranteed bounds on the exact solution in Equation~\ref{eq:complete_formula}:
\begin{equation}\label{eq:threshold-bounds}
    \ChangedBound^-=\min\left(\Bound, 1 - \frac{1}{I^-}\right) ; \quad \ChangedBound^+ =\min\left(\Bound, 1 - \frac{1}{I^+}\right).
\end{equation}

\def\approximation{If $I^- \leq \InfimumChangedSet \leq I^+$ and $I^-\geq 1$, then $\ChangedBound^-\leq \ChangedBound \leq \ChangedBound^+$.}
\begin{theorem}\label{thm:approximation}
    \approximation
\end{theorem}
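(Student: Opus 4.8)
The plan is to derive \cref{thm:approximation} as an almost immediate corollary of \cref{thm:equality}, by exploiting monotonicity of the map $x \mapsto \min\bigl(\Bound, 1 - \tfrac{1}{x}\bigr)$ on the relevant domain. First I would record the standing hypotheses: $I^- \leq \InfimumChangedSet \leq I^+$ and $I^- \geq 1$. Since $I^- \geq 1$ and $I^- \leq \InfimumChangedSet$, we also get $\InfimumChangedSet \geq 1$, so \cref{thm:equality} applies and tells us that $\ChangedBound$ exists and equals Equation~\ref{eq:complete_formula}, namely $\ChangedBound = \min\bigl(\Bound, 1 - \tfrac{1}{\InfimumChangedSet}\bigr)$. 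Likewise $I^+ \geq \InfimumChangedSet \geq 1$, so all three quantities $I^-, \InfimumChangedSet, I^+$ lie in $[1,\infty)$ where the formula and the auxiliary function are well-defined.

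The key observation is that $g(x) := \min\bigl(\Bound, 1 - \tfrac{1}{x}\bigr)$ is non-decreasing for $x \geq 1$: the inner function $x \mapsto 1 - \tfrac{1}{x}$ is strictly increasing on $(0,\infty)$, and taking the pointwise minimum with the constant $\Bound$ preserves monotonicity. I would state this as a one-line lemma or simply inline it. Applying $g$ to the chain $I^- \leq \InfimumChangedSet \leq I^+$ then yields $g(I^-) \leq g(\InfimumChangedSet) \leq g(I^+)$. By the definitions in Equation~\ref{eq:threshold-bounds}, $g(I^-) = \ChangedBound^-$ and $g(I^+) = \ChangedBound^+$, and by \cref{thm:equality} just invoked, $g(\InfimumChangedSet) = \ChangedBound$. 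Chaining these three equalities with the two inequalities gives exactly $\ChangedBound^- \leq \ChangedBound \leq \ChangedBound^+$, which is the claim.

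I do not anticipate a genuine obstacle here; the only points requiring a modicum of care are (i) confirming that the hypothesis $I^- \geq 1$ is what licenses the use of \cref{thm:equality} (we need $\InfimumChangedSet \geq 1$, and this follows from $1 \leq I^- \leq \InfimumChangedSet$), and (ii) being explicit that the monotonicity argument is valid precisely on the domain $[1,\infty)$, which is why the hypothesis $I^- \geq 1$ rather than merely $\InfimumChangedSet \geq 1$ is stated — it guarantees the \emph{lower} endpoint $I^-$ is also in the domain where $g$ behaves monotonically and where $1 - \tfrac{1}{I^-} \geq 0$ so that $\ChangedBound^-$ is a legitimate probability threshold in $[0,1]$. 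Everything else is routine substitution, so the full proof should be only a few lines: invoke \cref{thm:equality}, establish monotonicity of $g$ on $[1,\infty)$, and apply it to the assumed inequality chain.
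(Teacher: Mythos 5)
Your proposal is correct and follows essentially the same route as the paper's proof: establish monotonicity of $x \mapsto \min\bigl(\Bound, 1 - \tfrac{1}{x}\bigr)$, apply it to the chain $I^- \leq \InfimumChangedSet \leq I^+$, and identify the middle term with $\ChangedBound$ via \cref{thm:equality}. The only cosmetic difference is that the paper proves monotonicity by decomposing into $f(x) = 1 - \tfrac{1}{x}$ and $g(x) = \min(x,\rho)$ separately, whereas you inline the argument.
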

\textit{Proof sketch (proof in App.~\ref{sec:app-theorem-2}).} We show that the function in Equation~\ref{eq:complete_formula} is monotonically increasing for $\InfimumChangedSet$ on the interval $(0,\infty)$. Additionally, if $\InfimumChangedSet\geq 1$ the formula in Equation~\ref{eq:complete_formula} is, by \cref{thm:equality}, equal to $\ChangedBound$. Thus, if $1 \leq I^- \leq \InfimumChangedSet \leq I^+$, it follows that $\ChangedBound^-\leq \ChangedBound \leq \ChangedBound^+$. 

VeRecycle outputs $\ChangedBound^-$ as a guaranteed threshold, certified by the original certificate for any realization of \smash{$\ChangedDynamics$}. The tightness of $I^-$ to $\InfimumChangedSet$---and, consequently, $\ChangedBound^-$ to $\ChangedBound$---depends on the size of the interval used for IBP. Thus, VeRecycle uses a mesh to split up $\ChangedSet$ into smaller intervals, increasing the tightness of $\ChangedBound^-$ to $\ChangedBound$. 

If $\ChangedBound^-$ is unsatisfactory, e.g., it does not match the original guarantee, or exceeds some permissible drop in probability, $\ChangedSet$ can be split up using a finer mesh. Such a refinement is only done if the desired threshold is below $\ChangedBound^+$: otherwise, the desired threshold cannot be obtained by refinement. In those cases, VeRecycle's output serves as proof that certification of the specification cannot be achieved without making additional assumptions on the changed dynamics, and/or performing additional training of the neural certificate. 

The complexity of VeRecycle depends only on the complexity of determining (approximate) infima. For neural certificates, it is therefore determined by the complexity of IBP and the mesh size used.
\section{VeRecycle for Compositional Control}\label{sec:compositional}
VeRecycle is particularly valuable for systems under compositional control, in which a selection of policies is executed sequentially to complete a complex task. The modularity of such control allows VeRecycle to first reclaim guarantees for each component individually. Subsequently, the least affected components can be reused directly in a new composition, minimizing the need for costly repairs to neural control policies, certificates, and system models.

\begin{figure}[t]
        \centering
        \includegraphics[width=\columnwidth]{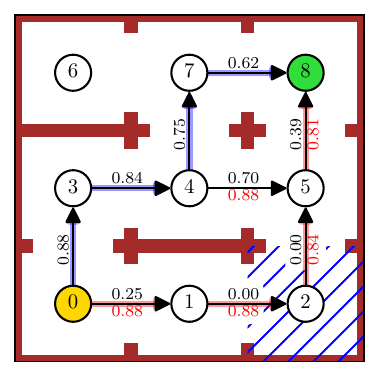}
        \caption{A compositional policy (blue) for a decomposed reach-avoid task: $0$ (yellow) $\rightarrow 8$ (green). 
        Vertices are discrete abstractions of the rooms in the continuous environment. A directed edge signifies the existence of a certified policy between two rooms. Updated thresholds (black; original in red) reveal that the old path (red edges) no longer maximizes the global probability threshold.}\label{fig:VeRecycle-examples-graph}
\end{figure}

We consider compositional control based on a reach-avoid specification that is decomposed into several connected subtasks, either manually or automatically~\cite{jothimurugan2021}. This task decomposition is represented as a directed graph $G=(V, E, \InitialVertex, \TargetVertex, \beta)$ where the mapping function $\beta:V\cup E\rightarrow\statespace$ relates the graph's elements to a stochastic dynamical system's state space: an edge $e$ from vertex $v_a$ to $v_b$ represents the task of reaching $\beta(v_b)$ from $\beta(v_a)$ while avoiding $\beta(e)$. 

A compositional policy for $G$ then consists of a set of edge-associated policies $\{\Policy[e]\mid e\in E\}$ and a path $\GlobalPolicy = (\InitialVertex, \dots, v_k)$ in $G$, representing sequential execution of the policies $\Policy[e]$ associated with each traversed edge $e$. Certification of the global reach-avoid specification $\Specification$ with threshold $\Bound$, requires that valid probability thresholds $\Bound[e]$ and corresponding certificates $\Certificate[e]$ are given for all edge-associated policies $\Policy[e]$, and that multiplying them along the edges of the path $\GlobalPolicy$ results in a value above the global probability threshold, i.e. $\prod_{i = 1}^{k} \Bound[(v_{i-1}, v_{i})] \leq \Bound$.

\begin{example}\label{ex:nine-rooms-compositional}
    We return to the running example of a robot navigating a warehouse with nine rooms. The goal---to navigate from the center of room 0 to the center of room 8---is broken down into subtasks of moving between two specific rooms. The specific task decomposition graph $G$ is shown in \cref{fig:VeRecycle-examples-graph}, with function $\beta$ mapping vertices 0--8 to the dashed regions in \cref{fig:VeRecycle-examples-certificate}, and each edge to the walls ($\avoid$). The compositional policy used to control the robot consists of the path $\GlobalPolicy = (0,1,2,5,8)$ highlighted in red, and for each edge $e$, a policy $\Policy[e]$. This compositional policy is certified for the original dynamics of the system: for each edge $e$, some certificate $\Certificate[e]$ is given that proves a threshold $\Bound[e]$.
\end{example}

\Cref{alg:compositional_algorithm} shows how VeRecycle can be used to reclaim guarantees for such compositional policies, given the set $\ChangedSet$ with changed dynamics, the graph $G$, and the sets of edge-associated certificates. 
For each edge in the task decomposition graph $G$ used by the compositional controller, VeRecycle is invoked to reclaim a guaranteed threshold for the original certificate, ensuring validity for any change within $\ChangedSet$ (\cref{line:verecycle_call}). 
Each edge in $G$ is assigned the weight $-\log(p)$, where $p$ is the threshold obtained from VeRecycle (\cref{line:assign_weight}). The problem of maximizing the probability threshold of a path becomes a minimization problem that can be solved by an out-of-the-box shortest-path solver (\cref{line:shortest_path}). The sum of the weights along the shortest path is then converted into the probability threshold for that path (\cref{line:cost_to_bound}).

\begin{example}[Example~\ref{ex:nine-rooms-compositional} continued]\label{ex:changed-nine-rooms-compositional}
    Once water has been spilled, the new dynamics \smash{$\ChangedDynamics$}---unknown in the states $\ChangedSet = [2,3]\times[0,1]$---take effect. 
    Given $\ChangedSet, G, \{\Certificate[e]\mid e\in E\}$, and $ \{\Bound[e] \mid e\in E\}$, \cref{alg:compositional_algorithm}---invoking VeRecycle for each edge---outputs an alternative blue path $\ChangedGlobalPolicy=(0,3,4,7,8)$ highlighted in \cref{fig:VeRecycle-examples-graph} which is certified to have at least $\ChangedBound = 0.88 \cdot 0.84 \cdot 0.75\cdot 0.62 \geq 0.33$, regardless of the effect of the spilled water. 
    Although the certified threshold on the original path $\GlobalPolicy$ has dropped to zero, an alternative with non-trivial guarantees could still be obtained. This illustrates how VeRecycle can exploit the modular nature of compositional policies while leaving policies and certificates (often neural networks) unaltered.
\end{example}

The complexity of \cref{alg:compositional_algorithm} depends on the complexity of the calls to VeRecycle in \cref{line:verecycle_call}, made once for each edge in the graph. While this complexity could likely be reduced further by minimizing the number of VeRecycle calls, e.g., a shortest-path solver with on-demand calls to VeRecycle only, this is outside the scope of this work. \Cref{alg:compositional_algorithm} serves as a proof-of-concept implementation of VeRecycle applied to compositional control. 

\begin{algorithm}[tb]
    \caption{Recomposing Compositional Policies}
    \label{alg:compositional_algorithm}
    \KwData{Set $\ChangedSet$, graph $G=(V,E,\InitialVertex,\TargetVertex,\beta)$, certificates $\{\Certificate[e] \mid e\in E\}$ and thresholds $\{\Bound[e] \mid e\in E\}$.}
    
    \KwResult{Path $\ChangedPath$ and threshold $\ChangedBound$.}
    $W \leftarrow$ empty hash map\\
    \ForEach{$e \in E$}{
            $\ChangedBound[e] \leftarrow \textsc{VeRecycle}(\Certificate[e], \Bound[e], \ChangedSet)$\\ \label{line:verecycle_call}
            $W[e] \leftarrow -\log{\ChangedBound[e]}$\\ \label{line:assign_weight}
    }
    $(\InitialVertex, \dots, \TargetVertex), $ weight $ \leftarrow \textsc{ShortestPath} (G,W)$\\\label{line:shortest_path}
    $\ChangedBound \leftarrow 2^{-\text{weight}}$\\\label{line:cost_to_bound}
    \Return $(v_0, \dots, v_{\star}),\ChangedBound$\\
\end{algorithm}

\section{Evaluation}
\label{sec:evaluation}

We demonstrate
a VeRecycle implementation\footnote{Code at https://github.com/SUMI-lab/VeRecycle} with interval bound propagation~\cite{gowal_effectiveness_2019} to approximate infima of neural certificates. 
We analyze the efficiency and quality of reclaimed guarantees by VeRecycle on both stand-alone and compositional neural control in the ``Stochastic Nine Rooms'' benchmark (\cref{fig:VeRecycle-examples-graph}) introduced by \citeauthor{zikelic_compositional_2023}~[\citeyear{zikelic_compositional_2023}], with new, unknown dynamics in one of the rooms (e.g., the blue hatched area).

Out-of-the-box baselines are unavailable since, to the best of our knowledge, VeRecycle is the first framework to reclaim sound guarantees without requiring updated models of the dynamics. As a baseline, we therefore use state-of-the-art neural certification~\cite{badings_learning-based_2024} on worst-case dynamics in $\ChangedSet$ (absorbing, i.e., $\state_{t+1} = \state_{t}$), treating $\ChangedSet$ as part of the unsafe set $\avoid$. We consider three settings for this baseline. First (B-I), verifying that a given new threshold---output by VeRecycle---is valid for the original certificate, i.e., given $C$ and $\rho'$, decide correctness. Second (B-II), finding a correct threshold given the original certificate as a candidate, i.e., given $C$, find a valid $\rho'$ and $C'$. Third (B-III), finding a correct threshold and certificate from scratch, i.e., determine $\rho'$ and $C'$, without $C$. 
All experiments are run on
\citeauthor{DHPC2024} [\citeyear{DHPC2024}] with NVIDIA A100 GPUs.

\subsection{Experiment 1: Individual Certificates}
We analyze VeRecycle's reclaimed thresholds for 63 unique tasks, combining 9 reach-avoid specifications (graph edges in \cref{fig:VeRecycle-examples-graph}) with 7 different subsets $\ChangedSet$ (rooms). As input, we use neural policies obtained through proximal policy optimization (PPO)~\cite{schulman_proximal_2017} with neural reach-avoid certificates~\cite{badings_learning-based_2024}. Technical details are provided in App.~\ref{app:experimental_setup}.
\begin{table}[t]
\centering

\begin{tabular}{@{}l r@{\hspace{6pt}}r@{\hspace{6pt}}r@{\hspace{6pt}}r}
\toprule
& & \multicolumn{3}{c}{Threshold $\ChangedBound^-$}\\
\cmidrule(lr){3-5} 
Method & Time (s) & Loose $C$ & Tight $C$ & All \\
\midrule
VeRecycle & \textbf{0.33} \!\tiny{$\pm$\! 0.66}     &  0.58         \!\tiny{$\pm$\! 0.35} & \textbf{0.65} \!\tiny{$\pm$\! 0.35} & 0.59      \!\tiny{$\pm$\! 0.35} \\
B-I & 9.22                \!\tiny{$\pm$\! 2.79}     & 0.58          \!\tiny{$\pm$\! 0.35} & \textbf{0.65} \!\tiny{$\pm$\! 0.35} & 0.59      \!\tiny{$\pm$\! 0.35} \\
B-II 
& 191.73 \!\tiny{$\pm$\! 161.64}   
& \textbf{0.61} \!\tiny{$\pm$\! 0.34} 
& 0.64 \!\tiny{$\pm$\! 0.34} 
& \textbf{0.62} \!\tiny{$\pm$\! 0.34} \\
B-III 
& 918.96 \!\tiny{$\pm$\! 190.64}   
& 0.51   \!\tiny{$\pm$\! 0.31} 
& 0.53   \!\tiny{$\pm$\! 0.29} 
& 0.51   \!\tiny{$\pm$\! 0.31} \\
\bottomrule\end{tabular}
\caption{Runtime and reclaimed probability thresholds, averaged (with standard deviation) over 5 random seeds and 63 tasks (9 specifications $\times$ 7 changes). Thresholds are grouped by the quality of original certificate $C$: tight certificates (available for 2 out of 9 specifications) are below $\frac{1}{1-\Bound}$ only inside the rooms containing $\initial$ and $\reach$. Best result per column is marked in bold.}
\label{tab:experiment-monolithic}
\end{table}

In runtime (\cref{tab:experiment-monolithic}, column 2), VeRecycle outperforms the baseline on all settings. 
Searching for a new threshold and certificate from scratch (B-III) requires the highest computation time. Using the original certificate as a candidate (B-II) reduces computation time by an order of magnitude, emphasizing the benefit of reusing original verification results. When informed of a correct threshold by VeRecycle (B-I) runtime decreases further, demonstrating the benefit of inferring thresholds from certificates rather than trial-and-error search. VeRecycle outperforms B-I by two orders of magnitude, showing the efficiency of ignoring unaffected states.

In addition to requiring less computational effort, VeRecycle produces tighter probability thresholds (\cref{tab:experiment-monolithic}, columns 3--5) than re-verification from scratch (B-III). While re-verification with the original certificate (B-II) produces higher thresholds on average, VeRecycle performs comparably on high-quality input certificates---where $\Certificate(\state) \geq \frac{1}{1-\Bound}$ for all rooms outside the initial and target rooms. As illustrated in \cref{fig:certificate_comparison}, B-II can refine unnecessary low-valued states in ``loose'' certificates to achieve a higher threshold than VeRecycle. However, this refinement process is computationally expensive, making VeRecycle a lightweight alternative to identify when certificate refinement is necessary. Future work on improving certification techniques for generating higher-quality certificates would further enhance VeRecycle’s performance and applicability.

\begin{figure}
    \centering
    \includegraphics[width=\linewidth]{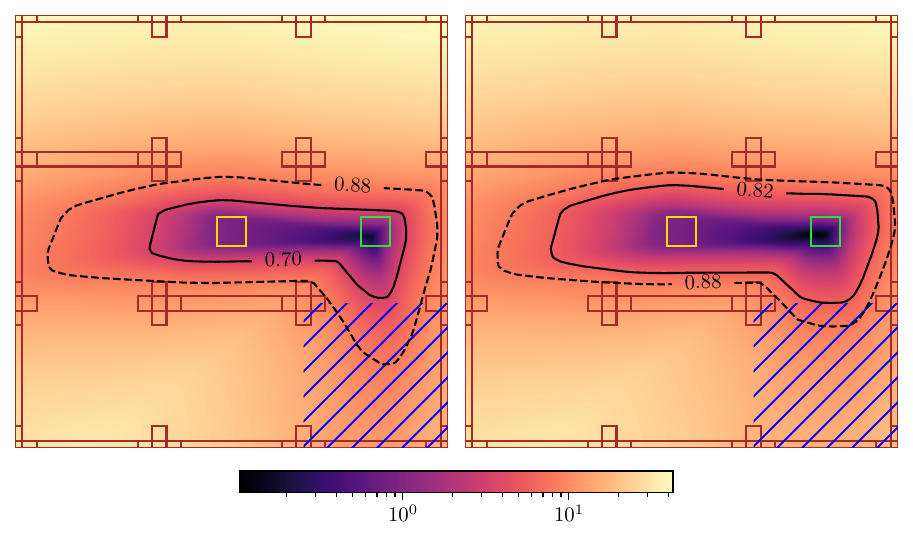}
    \caption{Example of a ``loose'' original certificate (gradient left) refined by baseline algorithm (gradient right) to obtain an updated threshold $\ChangedBound$ (solid line) closer to the original $\Bound$ (dashed line). Yellow ($\initial$), green ($\reach$), red ($\avoid$), and blue ($\ChangedSet$) denote task sets.}
    \label{fig:certificate_comparison}
\end{figure}

\subsection{Experiment 2: Compositional Control}

\begin{table}[t]
\centering
\begin{tabular}{lccc}
\toprule
& \multicolumn{3}{c}{Avg. $\ChangedBound^-$ per $\ChangedSet$ location} \\
\cmidrule(lr){2-4} 
Method & Target adj. & Other & All \\
\midrule
VeRecycle   & 0.24          \!\tiny{$\pm$\! 0.09} & \textbf{0.44} \!\tiny{$\pm$\! 0.06} & 0.38 \!\tiny{$\pm$\! 0.12} \\
B-I         &0.24           \!\tiny{$\pm$\! 0.09} &\textbf{0.44}  \!\tiny{$\pm$\! 0.06} &0.38 \!\tiny{$\pm$\! 0.12} \\
B-II        & \textbf{0.40} \!\tiny{$\pm$\! 0.07} & 0.42          \!\tiny{$\pm$\! 0.06} &\textbf{0.42} \!\tiny{$\pm$\! 0.06} \\
B-III       & 0.19 \!\tiny{$\pm$\! 0.09} & 0.21          \!\tiny{$\pm$\! 0.06} & 0.20 \!\tiny{$\pm$\! 0.07} \\
\bottomrule\end{tabular}
\caption{Reclaimed probability bounds for compositional policy (Experiment 2), averaged over 5 random seeds for 7 different sets $\ChangedSet$ with new dynamics. Grouped by whether $\ChangedSet$ is adjacent to the target room, i.e., in room 5 or 7. Best results per column are in bold.}
\label{tab:global_bounds}
\end{table}
Next, we compare the global thresholds obtained with \cref{alg:compositional_algorithm} on the graph from \cref{fig:VeRecycle-examples-graph}, using VeRecycle and baselines to reclaim guarantees for each graph edge, and NetworkX's Dijkstra~\cite{hagberg_exploring_2008} as shortest-path solver. The results (\cref{tab:global_bounds}) show that VeRecycle performs competitively on the majority of tasks. 

In two particular tasks---rooms with vertex 5 and 7 having unknown dynamics---the effect of low-quality certificates on VeRecycle's performance can still be observed. In those cases, the edges (7,8) and (5,8)---both with ``loose'' certificates---are simultaneously impacted, thus impacting the reclaimed guarantee on all feasible paths to the target states. 

Overall, we observe that VeRecycle is particularly powerful in compositional control settings, because the impact of low-quality input certificates is mitigated by the availability of alternative policies. 

\section{Related Work}
\label{sec:related_work}

\paragraph{Probabilistic neural reach-avoid certification.}
Probabilistic neural certification is the state-of-the-art for verifying reach-avoid specifications for stochastic dynamical systems under neural control~\cite{zikelic_learning_2023,ansaripour2023,chatterjee2023,mathiesen2023,abate2024stochastic,badings_learning-based_2024}. 
VeRecycle complements these methods by efficiently deciding whether such guarantees are maintained when system dynamics change, minimizing the high computational cost of re-certifying the complete updated system model. 

\paragraph{Robust control synthesis.}
Robust control synthesis addresses changes in dynamics by designing control policies that perform reliably under a modeled range of disturbances. 
Techniques include adaptations of Markov Decision Process (MDP) solvers for richer models expressing multiple variations of the system: e.g., parametric MDPs \cite{badings_scenario-based_2022}, interval MDPs \cite{jiang2022,coppola_data-driven_2024}, uncertain MDPs \cite{wolff_robust_2012}, and multi-environment MDPs \cite{raskin_multiple-environment_2014}. 
While the type of changes addressed by VeRecycle could theoretically be addressed by robust synthesis techniques as well, this would require either knowing all possible changes in advance or updating and solving the adapted MDP again. Thus, we see VeRecycle as a lightweight alternative that does not require prior knowledge of possible changes.

\paragraph{Reusing policies in compositional control.}
Compositional control as described in \cref{sec:compositional} stems from the concept of options~\cite{sutton_between_1999}, adapted for safety-critical applications through correct-by-design synthesis methods for a high-level plan \cite{jothimurugan2021,neary_verifiable_2023} and formal verification of used components \cite{ivanov_compositional_2021,delgrange2024controller}. Notably, \cite{zikelic_compositional_2023} verify each component with probabilistic neural certification, inspiring our application of VeRecycle to compositional control. In addition to solving more difficult, long-horizon tasks, a motivation for compositional control is reusing the components in alternative high-level plans. This is not straightforward in case of changes in system dynamics, however, since guarantees on components must still be reclaimed before composing a (new) high-level plan, making VeRecycle complementary to compositional control.
\section{Conclusion}
\label{sec:conclusion}
VeRecycle is the first framework to formally reclaim guarantees from probabilistic neural certificates for discrete-time stochastic dynamical systems after localized changes. 
We formally prove that VeRecycle infers sound guarantees from (approximated) infima of the original certificates. 
Our experimental evaluation demonstrates scenarios in which VeRecycle's guarantees are competitive---particularly for compositional control---compared to complete re-certification, with computational cost reduced by three orders of magnitude. 

\paragraph{Future work.} We plan to extend the presented theoretical foundations to probabilistic neural certificates for continuous-time systems~\cite{neustroev2025neural}. Additionally, we aim to explore how the certificate learning process can be optimized to maximize the guarantees reclaimed with VeRecycle. Another avenue of future research is synergies between VeRecycle and learning-enabled certification~\cite{kolter2019,wu2023neural,takeishi2021,schlaginhaufen2021learning}, using VeRecycle to pinpoint crucial states for which new dynamics must be learned.
\clearpage
\appendix
\setcounter{lemma}{0}
\setcounter{theorem}{0}
\newcommand{\proofsection}[1]{%
    \section{Proof of \cref{thm:#1}}\label{app:proof-#1}%
}
\section*{Technical Appendix}
\proofsection{lower-bound}\label{sec:app-lemma-1}
\begin{lemma}[Restated]
    \lowerbound
\end{lemma}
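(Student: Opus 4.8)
\textit{Proof proposal.} The plan is to produce an explicit reclaimable threshold that equals Equation~\ref{eq:complete_formula} and then appeal to the definition of the maximum in \cref{prob:monolithic}. Concretely, I would set
\[
    \rho' \;=\; \min\!\left(\Bound,\; 1-\frac{1}{\InfimumChangedSet}\right),
\]
which lies in $[0,1]$ since $\InfimumChangedSet\geq 1$ forces $1-\tfrac{1}{\InfimumChangedSet}\in[0,1)$ and $\Bound\in[0,1]$, and which satisfies $\rho'\leq\Bound$, so it is an admissible candidate for $\ChangedBound$. It then suffices to show that $\rho'$ is reclaimable in the sense of \cref{def:reclaimable_threshold}, i.e., that the \emph{same} function $\Certificate$ satisfies all three conditions of \cref{def:reach-avoid-certificate} for the specification $(\reach,\avoid,\initial,\rho')$ and \emph{every} changed system $\ChangedSystem=(\statespace,\actionspace,\noisespace,\ChangedDynamics,\mu)$ whose dynamics deviate from $\Dynamics$ only inside $\ChangedSet$ (Equation~\ref{eq:changed_set}); once this is established, $\ChangedBound\geq\rho'$ by maximality.

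The easy conditions transfer directly: continuity and nonnegativity of $\Certificate$ are inherited because $\Certificate$ is unchanged; the initial condition on $\initial$ is literally the one already holding for $\System$; and for the safety condition, $\rho'\leq\Bound$ gives $\tfrac{1}{1-\rho'}\leq\tfrac{1}{1-\Bound}$, so $\Certificate(\state)\geq\tfrac{1}{1-\Bound}\geq\tfrac{1}{1-\rho'}$ on $\avoid$. The substance is the decrease condition, which I would verify using the same $\varepsilon>0$ that witnesses it for $\System$. Fix $\state\in\statespace\setminus\reach$ and split on membership in $\ChangedSet$. If $\state\in\ChangedSet$, the choice of $\rho'$ gives $1-\rho'\geq\tfrac{1}{\InfimumChangedSet}$, hence $\Certificate(\state)\geq\InfimumChangedSet\geq\tfrac{1}{1-\rho'}$, so the second disjunct of condition~3 holds and the value of $\varepsilon$ is immaterial there. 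If $\state\notin\ChangedSet$, the contrapositive of Equation~\ref{eq:changed_set} gives $\ChangedDynamics(\state,\action,\noise)=\Dynamics(\state,\action,\noise)$ for all $\action,\noise$, so $\mathbb{E}_{\noise\sim\mu}[\Certificate(\state)-\Certificate(\ChangedDynamics(\state,\Policy(\state),\noise))]=\mathbb{E}_{\noise\sim\mu}[\Certificate(\state)-\Certificate(\Dynamics(\state,\Policy(\state),\noise))]$, and the original decrease condition at $\state$ then yields either an expected drop of at least $\varepsilon$ or $\Certificate(\state)\geq\tfrac{1}{1-\Bound}\geq\tfrac{1}{1-\rho'}$. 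In all cases condition~3 holds at $\state$ for $\ChangedSystem$ with threshold $\rho'$, so $\Certificate$ certifies $\ChangedSystem,\Policy\vDash(\reach,\avoid,\initial,\rho')$.

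I expect the only delicate point to be articulating why the one-step expectation at a state $\state\notin\ChangedSet$ is genuinely unaffected by the change: even if $\Dynamics(\state,\Policy(\state),\noise)$ lands inside $\ChangedSet$ for some $\noise$, the expectation in condition~3 depends only on the single transition map out of $\state$, which is unchanged, not on any later dynamics. The accompanying bookkeeping -- that one fixed $\varepsilon$ (namely the one from $\System$) works uniformly over all $\state$, because states in $\ChangedSet$ already satisfy the safety-value disjunct -- should be stated explicitly but is routine. Everything else is a direct inheritance of the original certificate's properties together with the inequality $\rho'\leq\Bound$.
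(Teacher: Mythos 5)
Your proposal is correct and follows essentially the same route as the paper's proof: fix $\rho'$ equal to Equation~\ref{eq:complete_formula}, check $\rho'\in[0,\Bound]$, inherit conditions 1--2 from the original certificate via $\rho'\leq\Bound$, and verify condition 3 by splitting on membership in $\ChangedSet$ (using the infimum bound inside $\ChangedSet$ and the unchanged one-step dynamics outside). The only cosmetic difference is that the paper uses a three-way case split on condition 3 (first on whether $\Certificate(\state)\geq\frac{1}{1-\Bound}$, then on membership in $\ChangedSet$) where you use a two-way split, but the content is identical; your remark that the one-step expectation outside $\ChangedSet$ is unaffected even when the successor lands in $\ChangedSet$ matches the paper's own emphasis.
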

\begin{proof}
    Choose a system $\Sigma=(\statespace,\actionspace,\noisespace,f,\mu)$, policy $\Policy$, reach-avoid specification $\varphi=(\reach,\avoid,\initial,\Bound)$, and certificate $C$ such that $C$ certifies $\Sigma,\Policy\vDash\varphi$. Choose arbitrary $\ChangedSet\subset\statespace$ such that $\InfimumChangedSet \geq 1$ and $\ChangedDynamics$ such that Equation~\ref{eq:changed_set} is satisfied. Let $\Bound'$ equal Equation~\ref{eq:complete_formula}.

    By \cref{prob:monolithic}, $\ChangedBound$ is the maximum probability in $[0,\Bound]$ that is a threshold certified by $\Certificate$ for the alternative dynamics $\ChangedDynamics$, i.e. $\Certificate$ certifies $\ChangedSystem,\Policy\vDash (\reach,\avoid,\initial,\ChangedBound)$ for $\ChangedSystem = (\statespace,\actionspace,\noisespace,\ChangedDynamics,\mu)$. 
    Thus, to show that $\Bound' \leq \ChangedBound$, we will show that $\Bound'\in[0,\Bound]$ and that $\Certificate$ certifies $\ChangedSystem,\Policy\vDash (\reach,\avoid,\initial,\Bound')$ for $\ChangedSystem = (\statespace,\actionspace,\noisespace,\ChangedDynamics,\mu)$. 
    
    First, note that by definition of a minimum, $\Bound' \leq \Bound$. Since we can derive from $\InfimumChangedSet \geq 1$ that $1-\frac{1}{\InfimumChangedSet}\geq 0$, and that $\Bound\geq 0$ by \cref{def:reach-avoid-specification}, it follows that $\rho'\in[0,\rho]$.

    Next, we show that conditions 1--3 from \cref{def:reach-avoid-certificate} hold for $\Certificate$ w.r.t. $\ChangedSystem$ and $\rho'$. 

    \begin{enumerate}
    \item Since $\Certificate$ certifies $\System$ for $\rho$, by condition 1 of \cref{def:reach-avoid-certificate} it holds for all $\state\in\initial$ that $\Certificate(\state)\leq 1$. Since neither $\initial$, nor $\Certificate$ has changed, condition 1 also holds for $\Certificate$ w.r.t. $\ChangedSystem$ and $\rho'$.

    \item Since $\Certificate$ certifies $\System$ for $\rho$, by condition 2 of \cref{def:reach-avoid-certificate} it holds for all $\state\in\avoid$ that $\Certificate(\state)\geq \frac{1}{1-\Bound}$. Since $\Bound \geq \Bound'$, 
    it follows that for all $\state\in\avoid$, $\Certificate(\state)\geq \frac{1}{1-\Bound}\geq \frac{1}{1-\Bound'}$. Thus, condition 2 holds for $\Certificate$ w.r.t. $\ChangedSystem$ and $\rho'$.
    
    \item Since $\Certificate$ certifies $\System$ for $\rho$, by condition 3 of \cref{def:reach-avoid-certificate} it holds that there exists a $\varepsilon > 0$ such that $\forall\state\in\statespace\backslash\reach$ either $\mathbb{E}_{\noise\sim\mu}[\Certificate(\state) - \Certificate(\Dynamics(\state, \Policy(\state), \noise))]\geq \varepsilon$ or $\Certificate(\state)\geq \frac{1}{1-\Bound}$. 
    Choose $\state\in\statespace\backslash\reach$. By division into cases we will show that either $\mathbb{E}_{\noise\sim\mu}[\Certificate(\state) - \Certificate(\ChangedDynamics(\state, \Policy(\state), \noise))]\geq \varepsilon$, or $\Certificate(\state)\geq \frac{1}{1-\Bound'}$ holds. 
        \begin{itemize}
            \item \textbf{Case 1 ($\Certificate(\state)\geq \frac{1}{1-\Bound}$)}:
            Suppose $\Certificate(\state)\geq \frac{1}{1-\Bound}$. 
            Since $\Bound'\leq \Bound$, it follows that $\Certificate(\state)\geq \frac{1}{1-\Bound}\geq \frac{1}{1-\Bound'}$.

            \item \textbf{Case 2 ($\Certificate(\state)< \frac{1}{1-\Bound}$ and $\state\in\ChangedSet$)}:
            Suppose $\Certificate(\state)< \frac{1}{1-\Bound}$ and $\state\in\ChangedSet$. It holds (by definition of an infimum) that $\Certificate(\state)\geq  {\InfimumChangedSet}$. Since our choice of $\Bound'$ ensures $\Bound' \leq 1-\frac{1}{\InfimumChangedSet}$, we can derive that $\Certificate(\state)\geq  \InfimumChangedSet \geq \frac{1}{1-\Bound'}$.

            \item \textbf{Case 3 ($\Certificate(\state)< \frac{1}{1-\Bound}$ and $\state\notin\ChangedSet$)}:
            Suppose $\Certificate(\state)< \frac{1}{1-\Bound}$ and $\state\notin\ChangedSet$. It follows that $\mathbb{E}_{\noise\sim\mu}[\Certificate(\state) - \Certificate(\Dynamics(\state, \Policy(\state), \noise))]\geq \varepsilon$. Since $\state\in\ChangedSet$ it follows from Equation~\ref{eq:changed_set} that $\Dynamics(\state,\action,\noise)=\ChangedDynamics(\state,\action,\noise)$. Therefore,
            \begin{multline*}
                \mathbb{E}_{\noise\sim\mu}[\Certificate(\state) - \Certificate(\ChangedDynamics(\state, \pi(\state), \noise))] \\ =
                \mathbb{E}_{\noise\sim\mu}[\Certificate(\state) - \Certificate(\Dynamics(\state, \pi(\state), \noise))] \geq \varepsilon.
            \end{multline*}
        \end{itemize}
        
        Since we have proven for all cases that either $\mathbb{E}_{\noise\sim\mu}[\Certificate(\state) - \Certificate(\ChangedDynamics(\state, \pi(\state), \noise))]\geq \varepsilon$, or $\Certificate(\state)\geq \frac{1}{1-\rho'}$, we conclude that condition 3 holds for $C$ w.r.t. $\ChangedSystem$ and $\rho'$. \qedhere
    \end{enumerate}
\end{proof}

\proofsection{upper-bound}\label{sec:app-lemma-2}
\begin{lemma}[Restated]
    \upperbound
\end{lemma}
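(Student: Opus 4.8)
\emph{Proof plan.}
The claim splits into two inequalities, $\ChangedBound \le \Bound$ and $\ChangedBound \le 1 - \tfrac{1}{\InfimumChangedSet}$. The first is immediate: by \cref{prob:monolithic}, $\ChangedBound$ is by definition the largest reclaimable threshold \emph{with $\rho' \le \Bound$}. (If no reclaimable threshold exists the statement is vacuous; under the hypothesis $\InfimumChangedSet \ge 1$ one is in any case guaranteed by \cref{thm:lower-bound}.) So the content lies in the second inequality, and the strategy is to force the certificate's hand by selecting the single most adversarial dynamics permitted by \cref{def:reclaimable_threshold}, namely (near-)absorbing behavior on $\ChangedSet$.

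Concretely, I would take $\ChangedDynamics$ with $\ChangedDynamics(\state,\action,\noise) = \state$ for all $\state \in \ChangedSet$ and $\ChangedDynamics(\state,\action,\noise) = \Dynamics(\state,\action,\noise)$ otherwise. Its disagreement set with $\Dynamics$ is contained in $\ChangedSet$, so Equation~\ref{eq:changed_set} holds and $\ChangedSystem = (\statespace,\actionspace,\noisespace,\ChangedDynamics,\mu)$ is among the systems quantified over in \cref{def:reclaimable_threshold}. (If one insists that $\ChangedDynamics$ respect the paper's standing Lipschitz assumptions, replace this by any Lipschitz map equal to the identity on $\mathrm{int}(\ChangedSet)$ and to $\Dynamics$ outside $\ChangedSet$; the argument below then runs verbatim on $\mathrm{int}(\ChangedSet)$ and continuity of $\Certificate$ extends the conclusion to the closure.) Since $\ChangedBound$ is reclaimable, $\Certificate$ certifies $\ChangedSystem,\Policy \vDash (\reach,\avoid,\initial,\ChangedBound)$, so in particular condition 3 of \cref{def:reach-avoid-certificate} holds for some $\varepsilon > 0$.

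I would then evaluate that condition at an arbitrary $\state \in \ChangedSet \setminus \reach$: there $\ChangedDynamics(\state,\Policy(\state),\noise) = \state$ for every $\noise$, hence $\mathbb{E}_{\noise\sim\mu}[\Certificate(\state) - \Certificate(\ChangedDynamics(\state,\Policy(\state),\noise))] = 0 < \varepsilon$, which forces the other disjunct, $\Certificate(\state) \ge \tfrac{1}{1-\ChangedBound}$. Taking the infimum over $\state \in \ChangedSet \setminus \reach$ gives $\inf\{\Certificate(\ChangedSet\setminus\reach)\} \ge \tfrac{1}{1-\ChangedBound} \ge 1 > 0$; since the change region is disjoint from the target in the settings considered ($\ChangedSet \cap \reach = \emptyset$, as in \cref{ex:VeRecycle-examples}), this infimum equals $\InfimumChangedSet$, and rearranging yields $\ChangedBound \le 1 - \tfrac{1}{\InfimumChangedSet}$. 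Combined with $\ChangedBound \le \Bound$ this gives $\ChangedBound \le \min(\Bound, 1 - \tfrac{1}{\InfimumChangedSet})$, i.e.\ Equation~\ref{eq:complete_formula}.

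\textbf{Main obstacle.} The genuinely delicate point is the $\reach$-exemption in condition 3 of \cref{def:reach-avoid-certificate}: the absorbing-dynamics substitution only pins down $\Certificate$ on $\ChangedSet \setminus \reach$, so one must either assume $\ChangedSet$ is disjoint from the target (which it is in all our settings) or, more generally, read the infimum in the bound over $\ChangedSet \setminus \reach$. A secondary, purely technical caveat is keeping $\ChangedDynamics$ within the regularity assumptions, handled by the ``identity on the interior'' phrasing together with continuity of $\Certificate$; everything after the key substitution into condition 3 is a one-line manipulation of inequalities.
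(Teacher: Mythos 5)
Your proposal is correct and follows essentially the same route as the paper's proof: instantiate the adversarial absorbing dynamics $\ChangedDynamics(\state,\action,\noise)=\state$ on $\ChangedSet$, observe that the expected decrease vanishes there, and invoke condition 3 of \cref{def:reach-avoid-certificate} to force $\Certificate \ge \frac{1}{1-\ChangedBound}$ on the changed set, combined with $\ChangedBound\le\Bound$ from \cref{prob:monolithic}. The two subtleties you flag---the $\reach$-exemption in condition 3 and the Lipschitz admissibility of the absorbing dynamics---are real and are silently passed over in the paper's own proof (which simply picks a point of $\ChangedSet$ attaining the infimum and applies condition 3 there), so your more careful handling, including taking the infimum rather than assuming it is attained, is a refinement rather than a deviation.
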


\begin{proof}
    Suppose $\InfimumChangedSet \geq 1$.
    Let 
    \begin{equation}
    \ChangedDynamics(\state,\action,\noise)= \begin{cases}
        \state & \text{for }\state\in\ChangedSet
        \\
        f(\state,\action,\noise)& \text{otherwise.}
    \end{cases}
    \end{equation}
    Since $\ChangedDynamics$ satisfies \cref{eq:changed_set}, it follows from \cref{prob:monolithic} that $C$ certifies $\ChangedSystem, \Policy\vDash (\reach,\avoid,\initial,\ChangedBound)$ for $\ChangedDynamics$. 
    
    Choose $\state\in\ChangedSet$ such that $\Certificate(x) = \InfimumChangedSet$. 
    Since $\ChangedDynamics(\state,\action,\noise)=\state$, $\mathbb{E}_{\noise\sim\mu}[\Certificate(\state) - \Certificate(\ChangedDynamics(\state, \Policy(\state), \noise))] = \mathbb{E}_{\noise\sim\mu}[\Certificate(\state) - \Certificate(\state)] = 0$. Thus, by condition 3 of \cref{def:reach-avoid-certificate}, it must hold that $\InfimumChangedSet \geq \frac{1}{1-\ChangedBound}$. 
    Rewriting this inequality gives $1-\frac{1}{\InfimumChangedSet}\geq \ChangedBound$. Additionally, by \cref{prob:monolithic}, $\Bound\geq \ChangedBound$. Thus, $\min(\Bound, 1-\frac{1}{\InfimumChangedSet})\geq \ChangedBound$.
\end{proof}

\proofsection{existence-solution}\label{sec:app-lemma-3}
\begin{lemma}[Restated]
    \existencesolution
\end{lemma}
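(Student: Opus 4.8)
\emph{Proof plan.} I would prove the contrapositive, exactly as suggested by the surrounding development: assuming that $\ChangedBound$ exists, I would derive $\InfimumChangedSet \geq 1$. The starting observation is that, by \cref{prob:monolithic} and \cref{def:reclaimable_threshold}, if $\ChangedBound$ exists it is in particular a reclaimable threshold, so $\Certificate$ must certify $\ChangedSystem,\Policy\vDash(\reach,\avoid,\initial,\ChangedBound)$ for \emph{every} alternative dynamics $\ChangedDynamics$ satisfying \cref{eq:changed_set}. This is the same lever used in the proof of \cref{thm:upper-bound}: I get to pick the most hostile admissible $\ChangedDynamics$.

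The choice I would make is the fully absorbing dynamics $\ChangedDynamics(\state,\action,\noise)=\state$ for all $\state\in\ChangedSet$ and $\ChangedDynamics(\state,\action,\noise)=\Dynamics(\state,\action,\noise)$ otherwise, which trivially satisfies \cref{eq:changed_set}. Since $\ChangedBound$ is reclaimable, $\Certificate$ satisfies the decrease condition (condition~3 of \cref{def:reach-avoid-certificate}) for $\ChangedSystem$, $\Policy$, and threshold $\ChangedBound$: there is $\varepsilon>0$ such that for every $\state\in\statespace\setminus\reach$, either $\mathbb{E}_{\noise\sim\mu}[\Certificate(\state)-\Certificate(\ChangedDynamics(\state,\Policy(\state),\noise))]\geq\varepsilon$ or $\Certificate(\state)\geq\frac{1}{1-\ChangedBound}$. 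For any $\state\in\ChangedSet$ the first disjunct reads $\mathbb{E}_{\noise\sim\mu}[\Certificate(\state)-\Certificate(\state)]=0\geq\varepsilon$, which is false because $\varepsilon>0$; hence $\Certificate(\state)\geq\frac{1}{1-\ChangedBound}$ for every such $\state$. Taking the infimum over $\ChangedSet$ gives $\InfimumChangedSet\geq\frac{1}{1-\ChangedBound}$, and since $\ChangedBound\geq 0$ we have $\frac{1}{1-\ChangedBound}\geq 1$, so $\InfimumChangedSet\geq 1$, contradicting the hypothesis $\InfimumChangedSet<1$. Consequently no reclaimable threshold exists and $\ChangedBound$ does not exist.

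The one point I expect to need care with — and which I would flag explicitly rather than wave through — is that condition~3 constrains only states in $\statespace\setminus\reach$, whereas the step above applies it to states of $\ChangedSet$. This is harmless precisely when the witnesses of $\InfimumChangedSet<1$ lie outside the target region $\reach$, which is the intended setting ($\ChangedSet$ models a localized disturbance away from the goal, as in \cref{ex:nine-rooms-monolithic-problem-statement}); in a fully general statement one would take infima over $\ChangedSet\setminus\reach$ throughout. The remaining ingredients are routine: reclaimability is closed downward in $\Bound'$, so "$\ChangedBound$ does not exist" is equivalent to "the set of reclaimable thresholds in $[0,\Bound]$ is empty," and the absorbing $\ChangedDynamics$ being admissible under \cref{eq:changed_set} is immediate.
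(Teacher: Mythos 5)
Your proposal is correct and follows essentially the same route as the paper's proof: instantiate the absorbing dynamics on $\ChangedSet$, observe that the expected decrease there is zero, and conclude from condition~3 that $\Certificate\geq\frac{1}{1-\ChangedBound}\geq 1$ on the relevant states. The caveat you flag about condition~3 applying only on $\statespace\setminus\reach$ is a genuine subtlety that the paper's own proof silently glosses over (it also picks a state attaining the infimum without justifying attainment, which your take-the-infimum-afterwards phrasing avoids), so if anything your version is slightly more careful.
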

\begin{proof}
    We prove the implication by assuming that $\ChangedBound$ exists, and concluding that $\InfimumChangedSet \geq 1$.
    
    Suppose $\ChangedBound$ exists. Let 
    \begin{equation}
    \ChangedDynamics(\state,\action,\noise)= \begin{cases}
        \state & \text{for }\state\in\ChangedSet
        \\
        f(\state,\action,\noise)& \text{otherwise.}
    \end{cases}
    \end{equation}
    Since $\ChangedDynamics$ satisfies \cref{eq:changed_set}, it follows from \cref{prob:monolithic} that $C$ certifies $\ChangedSystem, \Policy\vDash (\reach,\avoid,\initial,\ChangedBound)$ for $\ChangedDynamics$. 

    Choose $\state\in\ChangedSet$ such that $\Certificate(x) = \InfimumChangedSet$. 
    Since $\ChangedDynamics(\state,\action,\noise)=\state$, we do not have an expected decrease in $\state$: 
    \[\begin{aligned}
        \mathbb{E}_{\noise\sim\mu}[\Certificate(\state) - \Certificate(\ChangedDynamics(\state, \Policy(\state), \noise))] &= \\\mathbb{E}_{\noise\sim\mu}[\Certificate(\state) - \Certificate(\state)] &= 0.
    \end{aligned}\] 
    Thus, by \cref{def:reach-avoid-certificate}, condition 3, it must hold that $\InfimumChangedSet \geq \frac{1}{1-\ChangedBound}$. 
    By \cref{def:reach-avoid-specification}, $\ChangedBound\geq 0$, and, consequently, $\frac{1}{1-\ChangedBound} \geq 1$. Thus, $\InfimumChangedSet \geq \frac{1}{1-\ChangedBound} \geq 1$.
\end{proof}

\proofsection{equality}\label{sec:app-theorem-1}
\begin{theorem}[Restated]
    \equality
\end{theorem}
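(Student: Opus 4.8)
\textit{Proof sketch.} The plan is to dispose of the theorem by a single case split on whether $\InfimumChangedSet$ lies below or above $1$, invoking in each branch the lemma already established for that regime. No new technical machinery is needed: \cref{thm:lower-bound,thm:upper-bound,thm:existence-solution} were deliberately stated so that their conjunction is exactly the claim.

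First I would treat the case $\InfimumChangedSet < 1$. Here \cref{thm:existence-solution} applies verbatim and yields that $\ChangedBound$ does not exist, which is the first half of the theorem. Next, I would assume $\InfimumChangedSet \geq 1$ and argue that $\ChangedBound$ both exists and equals Equation~\ref{eq:complete_formula}. The proof of \cref{thm:lower-bound} does more than bound $\ChangedBound$ from below: it exhibits the concrete value $\rho' = \min(\Bound, 1 - 1/\InfimumChangedSet)$ as an \emph{actual} reclaimable threshold lying in $[0,\Bound]$, so the set of reclaimable thresholds at most $\Bound$ is nonempty. By \cref{thm:upper-bound}, every reclaimable threshold — hence any candidate maximum — is at most this same value $\rho'$. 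Combining the two, $\rho'$ is itself the greatest reclaimable threshold $\leq \Bound$, so the maximum in \cref{prob:monolithic} is attained and equals Equation~\ref{eq:complete_formula}.

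The only point requiring a moment's care — and the nearest thing to an obstacle — is the distinction between a supremum and an attained maximum in the definition of $\ChangedBound$: one must check that the optimal threshold is realized rather than merely approached. This is handled for free, because the reclaimable threshold produced by \cref{thm:lower-bound} coincides exactly with the upper bound from \cref{thm:upper-bound}, leaving no gap to close by a limiting argument. Everything else is a direct appeal to the three lemmas, so the formal proof in App.~\ref{sec:app-theorem-1} is essentially this two-line case analysis made explicit.
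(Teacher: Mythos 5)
Your proposal is correct and follows essentially the same two-case argument as the paper: Lemma~\ref{thm:existence-solution} for $\InfimumChangedSet < 1$, and the sandwich of Lemmas~\ref{thm:lower-bound} and~\ref{thm:upper-bound} at the common value $\rho'$ for $\InfimumChangedSet \geq 1$. Your explicit remark that the lower-bound lemma exhibits $\rho'$ as an actually attained reclaimable threshold (so existence of the maximum is not an issue) is a small but welcome clarification that the paper leaves implicit.
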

\begin{proof}
Suppose $\InfimumChangedSet < 1$. By \cref{thm:existence-solution}, it follows that $\ChangedBound$ does not exist. 

Suppose $\InfimumChangedSet \geq 1$. Let $\rho'$ equal Equation~\ref{eq:complete_formula}. By \cref{thm:lower-bound}, $\Bound' \leq\ChangedBound$. By \cref{thm:upper-bound}, $\Bound'\geq\ChangedBound$. Thus, $\Bound' = \ChangedBound$.
\end{proof}

\proofsection{approximation}\label{sec:app-theorem-2}
\begin{theorem}[Restated]
    \approximation
\end{theorem}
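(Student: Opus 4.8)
The plan is to reduce the statement to a monotonicity fact about the map defining Equation~\ref{eq:complete_formula}, and then chain inequalities through \cref{thm:equality}. Concretely, I would define $g\colon(0,\infty)\to[0,1]$ by $g(\iota)=\min\!\left(\Bound,\,1-\tfrac{1}{\iota}\right)$, so that Equation~\ref{eq:complete_formula} is $g(\InfimumChangedSet)$ and Equation~\ref{eq:threshold-bounds} reads $\ChangedBound^- = g(I^-)$ and $\ChangedBound^+ = g(I^+)$.

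The first step is to show $g$ is non-decreasing on $(0,\infty)$. The map $\iota\mapsto 1-\tfrac1\iota$ has derivative $\tfrac{1}{\iota^2}>0$, hence is strictly increasing there; the constant map $\iota\mapsto\Bound$ is trivially non-decreasing; and the pointwise minimum of two non-decreasing functions is non-decreasing (for $\iota_1\le\iota_2$, a short case split on which branch attains each minimum gives $\min(a,h(\iota_1))\le\min(a,h(\iota_2))$ whenever $h$ is non-decreasing). This yields $g(\iota_1)\le g(\iota_2)$ for $\iota_1\le\iota_2$.

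The second step is to invoke \cref{thm:equality}. From the hypothesis $I^-\ge 1$ and $I^-\le\InfimumChangedSet$ we get $\InfimumChangedSet\ge 1$, so \cref{thm:equality} applies and gives that $\ChangedBound$ exists and equals $g(\InfimumChangedSet)$. Applying the monotonicity of $g$ to the chain $I^-\le\InfimumChangedSet\le I^+$ then yields $g(I^-)\le g(\InfimumChangedSet)\le g(I^+)$, i.e.\ $\ChangedBound^-\le\ChangedBound\le\ChangedBound^+$, which is the claim.

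There is no genuinely hard step here; the only points needing care are (i) making the monotonicity argument for the minimum rigorous rather than appealing to intuition, and (ii) checking the hypothesis $\InfimumChangedSet\ge 1$ of \cref{thm:equality} before applying it, which is immediate from $I^-\ge 1$. I would also remark why $I^-\ge 1$ (rather than merely $\InfimumChangedSet\ge 1$) is assumed: it ensures $g(I^-)$ is evaluated in the same regime in which $\ChangedBound$ is defined, so that $\ChangedBound^-$ is a meaningful lower bound and not an artifact of the $\InfimumChangedSet<1$ case, where by \cref{thm:existence-solution} no reclaimable threshold exists.
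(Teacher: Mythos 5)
Your proposal is correct and follows essentially the same route as the paper's proof: establish that $\iota\mapsto\min\bigl(\Bound,1-\tfrac{1}{\iota}\bigr)$ is monotonically increasing on $(0,\infty)$, then chain $I^-\le\InfimumChangedSet\le I^+$ through this map and identify the middle term with $\ChangedBound$ via \cref{thm:equality}. The only cosmetic difference is that the paper proves monotonicity of the composition $\min(\cdot,\Bound)\circ(1-\tfrac{1}{\iota})$ by a three-case split, whereas you appeal to the (equivalent) fact that a pointwise minimum of non-decreasing functions is non-decreasing.
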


\begin{proof}
    The function $f(x) = 1-\frac{1}{x}$ is monotonically increasing on the interval $(0,\infty)$, since the derivative $f'(x)= \frac{1}{x^2}$ is positive on that interval.
    
    The function $g(x) = \min(x,\rho)$ is monotonically increasing, proven by the following (exhaustive) three cases:
    \begin{itemize}
        \item $\rho\leq a \leq b \rightarrow g(a)=\rho\leq \rho=g(b)$,
        \item $a\leq \rho \leq b \rightarrow g(a)=a\leq \rho=g(b)$,
        \item $a\leq b \leq \rho\rightarrow g(a)=a\leq b=g(b)$.
    \end{itemize}
    It follows that the composition $g$ of $f$, \[h(x) = g\circ f(x) = \min(\rho, 1-\frac{1}{x}),\] is also monotonically increasing on the interval $(0,\infty)$.

    Assume $1 \leq I^- \leq \InfimumChangedSet \leq I^+$.
    Since $h$ increases monotonically for $x\in(0,\infty)$, we have
    \[h(I^-) \leq h(\InfimumChangedSet) \leq h(I^+),\]
    which, by Equation~\ref{eq:threshold-bounds}, implies
    \[\ChangedBound^- \leq h(\InfimumChangedSet) \leq \ChangedBound^+.\]
    From \cref{thm:equality} and $\InfimumChangedSet\geq 1$, it follows that $\ChangedBound=h(\InfimumChangedSet)$. Thus,
    \[\ChangedBound^-\leq\ChangedBound\leq\ChangedBound^+.\qedhere\]
\end{proof}
\section{Experiments}\label{app:experimental_setup}
\subsection{Dynamics}
\label{app:dynamics}
The Stochastic Nine Rooms task has $\statespace = [0,3]^2\subset \mathbb{R}^2$, $\actionspace = [-2,2]^2 \subset \mathbb{R}^2$, and $\noisespace = [-0.005, 0.005]^2 \subset \mathbb{R}^2$. 
The dynamics function of the system is 
\begin{equation*}
    f(\state,\action,\noise) = \state + 0.1 \min (\max (\action, -1), 1) + \noise.
\end{equation*}
The disturbance vector $\noise$ has two components, $\noise_1$ and $\noise_2$, independently drawn from a triangular probability distribution on $\noisespace$.
\subsection{Reach-Avoid Objective}
For the global reach-avoid objective, the set of initial states is $\initial = [0.4,0.6]^2$ and the set of target states is $\reach = [2.4,2.6]^2$. The unsafe states $\avoid$ are all the brown walls in \cref{fig:VeRecycle-examples-graph}, where the walls along the border have a thickness of $0.05$ and all other walls have a thickness of $0.1$. 
The task decomposition graph $G$ is the DAG given in \cref{fig:VeRecycle-examples-graph} with $\beta$ such that each vertex $v\in[0,8]$ corresponds to the centers of each of the 9 rooms they are drawn in. More formally, 
\[[0.4 + (v \bmod 3),0.6 + (v \bmod 3)] \times [0.4 + \left\lfloor{\frac{v}{3}}\right\rfloor,0.6 +\left\lfloor{\frac{v}{3}}\right\rfloor].
\]
For each edge $e$, $\beta(e)=\avoid$, the unsafe states of the global task (all walls).

\subsection{Policy and Certificate Synthesis}\label{app:synthesis_original_control}
As input for the experiments, initial neural policies $\Policy[e]$, certificates $\Certificate[e]$, and thresholds $\Bound[e]$ were obtained for each of the reach-avoid tasks corresponding to an edge $e$ in the graph $G$ as follows:  
\begin{enumerate}  
    \item Train a Proximal Policy Optimization (PPO) algorithm with the reward function defined as $5 \cdot \text{goal\_reached} - 5 \cdot \text{fail}$, where $\text{fail} = 1$ if the current state is in $\avoid$ and $0$ otherwise, and $\text{goal\_reached} = 1$ if the current state is in $\reach$ and $0$ otherwise.  
    \item For $\ChangedBound = 0.5$, attempt to find a certificate using the method described in \cite{badings_learning-based_2024}.  
    \begin{enumerate}  
        \item If certification fails within the maximum number of iterations, restart training from Step 1 using a new random seed. Repeat this step at most 10 times.  
    \end{enumerate}  
    \item Upon successful certification, initiate a binary search procedure with a lower bound of 0.5 and an upper bound of 1. Attempt certification using \cite{badings_learning-based_2024} at each step of the binary search to refine the threshold.  
\end{enumerate}  
Hyperparameters used for policy and certificate synthesis are given in \cref{tab:hyper_parameters_edge_policies}.

\begin{table}[t]
    \centering
    \begin{tabular}{ll}
        \toprule
        Parameter & Setting\\
        \midrule
        number of random samples            & 90 000\\
        number of counterexamples           & 30 000 \\
        epochs                              & 25\\
        batch size                          & 4096\\
        counterexample fraction             & 0.25\\
        counterexample refresh fraction     & 0.50\\
        optimizer                           & Adam \cite{kingma_adam_2014}\\
        learning rate $c$                   & $5 \cdot 10^{-4}$\\
        learning rate $\pi$                 & $5 \cdot 10^{-5}$\\
        $N$ loss learner                    & 16\\
        pretraining steps (PPO)             & $10^6$\\
        points init. verification grid      & $10^6$\\
        max refine factor                   & 10\\
        noise partition cells               & 144\\
        \textbf{loss mesh $\tau$}           & \textbf{0.001}\\
        \textbf{min. $L_\pi$ pretrain}      & \textbf{10}\\
        \bottomrule
    \end{tabular}
    \caption{Hyper-parameters used for policy and certificate synthesis with implementation from \protect\cite{badings_learning-based_2024}. Non-default values are marked in bold.}
    \label{tab:hyper_parameters_edge_policies}
\end{table}

\section{Acknowledgments}
This research was funded by the NWO Veni grant Explainable Monitoring (222.119). This work was done in part while Anna Lukina and Sterre Lutz were visiting the Simons Institute for the Theory of Computing.

\bibliographystyle{named}
\bibliography{Zotero,editable}

\end{document}